\def\eqref#1{equation~\ref{#1}}
\def\1{\bm{1}}
\def\E{\mathbb{E}}
\def\1{\mathbf{1}}
\newcommand{\bb}[1]{\mathbb{#1}}
\newcommand{\mc}[1]{\mathcal{#1}}
\definecolor{betterblue}{rgb}{0,0,0.65}
\newtheorem{theorem}{Theorem}[section]
\newtheorem{lemma}[theorem]{Lemma}
\newtheorem{definition}[theorem]{Definition}
\newtheorem{assumption}[theorem]{Assumption}
\title{Posterior Sampling for Continuing Environments}
\author{Wanqiao Xu  \\
    Stanford University
    \AND
    Shi Dong \\
    Google DeepMind
    \AND
    Benjamin Van Roy \\
    Stanford University
    }
\begin{document}

\maketitle

\begin{abstract}


Existing posterior sampling algorithms for continuing reinforcement learning (RL) rely on maintaining state-action visitation counts, making them unsuitable for complex environments with high-dimensional state spaces.  We develop the first extension of posterior sampling for RL (PSRL) that is suited for a continuing agent-environment interface and integrates naturally into scalable agent designs. Our approach, Continuing PSRL, determines when to resample a new model of the environment from the posterior distribution based on a simple randomization scheme. We establish an $\tilde{O}(\tau S \sqrt{A T})$ bound on the Bayesian regret in the tabular setting, where $S$ is the number of environment states, $A$ is the number of actions, and $\tau$ denotes the {\it reward averaging time}, which is a bound on the duration required to accurately estimate the average reward of any policy. Our work is the first to formalize and rigorously analyze this random resampling approach. Our simulations demonstrate Continuing PSRL's effectiveness in high-dimensional state spaces where traditional algorithms fail.
\end{abstract}

\section{Introduction}
A reinforcement learning (RL) agent is faced with the task of interacting with an unknown environment while trying to maximize the total reward accrued over time. 
A core challenge in RL is how to balance the fundamental tradeoff: when taking exploratory actions, the agent accrues more knowledge about the unknown environment, but exploiting the knowledge obtained so far may result in higher immediate return. 

A growing literature builds on Thompson sampling \citep{Thompson1933ONTL, Russo2018ATO} to develop randomized approaches to exploration \citep{Osband2013MoreER, Osband2016RVF, Osband2016BootstrappedDQN}.  While these approaches have proved to be effective, they have largely been limited to episodic environments.  In particular, the {\it modus oparandi} involves randomly sampling a new policy, which aims to maximize expected return in a statistically plausible model of the environment, immediately before the start of each episode, and following that policy throughout the episode.  For example, bootstrapped DQN \citep{Osband2016BootstrappedDQN} maintains an ensemble that approximates the posterior distribution of the optimal action value function $Q_*$ and, before each $\ell$th episode, samples a random element $\hat{Q}_\ell$. Then, a greedy policy with respect to $\hat{Q}_\ell$ is executed.  

While learning in continuing environments is a fundamental problem in RL \citep{Naik2019DiscountedRL}, work on randomized approaches to exploration have largely focused on episodic environments, with few exceptions specific to a \emph{tabula rasa} context \citep{Ouyang2017TSDE, theocharous2017posterior}.  We develop for the first time a version of posterior sampling for reinforcement learning (PSRL) that is easily extended to work with function approximation in complex environments.  Our proposed method, Continuing PSRL, resamples a new policy at each time with a probability $p$.  Here, $p$ is specified as a part of the agent design and represents how the agent chooses to partition its experience into intervals that it interprets as trials.  With this resampling rule, it is natural to execute a policy that maximizes discounted return with a discount factor $\gamma = 1-p$. Indeed, a simple lemma shows that, with this choice of $\gamma$, the undiscounted return attained between consecutive resampling times constitutes an unbiased estimate of the $\gamma$-discounted return of the policy used. This simple resampling scheme easily integrates into randomized exploration algorithms with function approximation. For example, bootstrapped DQN \citep{Osband2016BootstrappedDQN} can be modified to address continuing environments by resampling the action value function from the prevailing approximate posterior distribution at each time with probability $p$.  As with the original version of bootstrapped DQN, each executed action is greedy with respect to the most recently sampled action value function.

Many theoretical works consider $\gamma$ as part of the environment, e.g. they directly analyze $\gamma$-discounted regret \citep{Lattimore2012PAC, Wang2020Q-learning}. In contrast, we assess agent performance in terms of undiscounted regret. Thus, while the discount factor $\gamma$ plays a role in agent design, it does not reflect what we view as the designer's objective. This viewpoint aligns with empirical work that regards the discount factor as a tunable hyperparameter \citep{Mnih2015HumanlevelCT, FranoisLavet2015HowTD}.  Our analysis shows that, while resampling with a probability $p=1-\gamma$ and planning with a $\gamma$-discounted objective does not lead to vanishing per-timestep regret, that can be accomplished by increasing $\gamma$ over time. 

Prior works considered versions of PSRL that treat continuing environments \citep{Ouyang2017TSDE, theocharous2017posterior} by directly planning under the undiscounted regret.  The algorithm proposed in \citet{Ouyang2017TSDE} resamples an environment from the environment posterior each time either of the two following criteria holds: 1) the time elapsed since the last resampling exceeds the interval between the two most recent resamplings, and 2) the number of visits to any state-action pair is doubled since the previous resampling.  The latter criterion plays an essential role but is not viable when operating in a complex environment, for example, addressing an intractably large state space and approximating a distribution of action value functions using a neural network \citep{Osband2016BootstrappedDQN, Dwaracherla2020Hypermodels}.  In particular, it is not clear how to efficiently track visitation counts, and even if that were possible, the counts could be irrelevant since it may even be rare to visit any individual state more than once.  In order to address large state spaces, \citet{theocharous2017posterior} considers simply doubling the duration between each successive pair of resampling times.  Although the resulting algorithm circumvents maintaining visitation counts, their analysis relies heavily on technical assumptions, without which the regret bound grows linearly with time. Another work \citep{tang2024efficient} that came after the first version of our paper considered using a fixed resampling schedule that yields a prior-dependent regret bound, but their result exhibits a loose dependence on problem parameters.  

In this paper, we formalize our aforementioned resampling approach to randomized exploration -- both with fixed and decreasing reset probabilities -- and provide a first rigorous analysis, which establishes regret bounds similar to \citet{Ouyang2017TSDE} but with a resampling criterion much simpler and more scalable than what is proposed in that paper.  Interestingly, our analysis is also simpler than that of \citet{Ouyang2017TSDE} because our resampling criterion is policy-independent. Specifically, we show that for a choice of discount factor that suitably depends on the horizon $T$, our algorithm, continuing posterior sampling, satisfies an $\tilde{O}(\tau S\sqrt{AT})$ regret bound, where $S$ is the number of environment states, $A$ is the number of actions, and $\tau$ denotes the reward averaging time \citep{Dong2021Simple}, which is a bound on the duration required to accurately estimate the average reward of any policy. This regret bound matches that established by \citet{Ouyang2017TSDE}, though their use of ``span'' replaced by the reward averaging time $\tau$.


\section{Problem Formulation}
We consider the problem of learning to optimize performance through a single stream of interactions with an unknown environment $\mathcal{E}=(\mathcal{A},\mathcal{S},\rho)$, modeled as a Markov decision process (MDP). Here $\mathcal{A}$ is a finite action space with cardinality $A$, $\mathcal{S}$ is a finite state space with cardinality $S$, and $\rho$ is a function that specifies a state transition probability $\rho(s'\mid s,a)$ given a current state $s\in\mathcal{S}$ and action $a\in\mathcal{A}$. Interactions up to time $t$ make up a history $\mathcal{H}_t=(S_0,A_0,S_1,A_1,S_2,\dots,A_{t-1},S_t)$, and the agent selects action $A_t$ after observing $S_t$. The environment and all associated random quantities we consider are defined within a common probability space $(\Omega, \mathcal{F}, \bb{P})$. In particular, the environment $\mc{E}$ itself is random, and we use a  distribution $\bb{P}(\mc{E}\in\cdot)$ to capture the agent designer's prior belief over all possible environments.  As the history evolves, what can be learned is represented by the posterior distribution $\bb{P}(\mc{E}\in\cdot|\mathcal{H}_t)$. We additionally assume that $\mc{A}$ and $\mc{S}$ are deterministic and known, but the observation probability function $\rho$ is a random variable that the agent needs to learn. For simplicity, we assume $S_0$ is deterministic, but the same analysis can be easily extended to consider a distribution over initial states. 
	

An agent's preferences can be represented by a \emph{reward function} $r:\mc{S}\times\mc{A}\mapsto[0,1]$. After selecting action $A_t$ in state $S_t$, the agent observes $S_{t+1}$ and receives a deterministic reward $R_{t+1}=r(S_t,A_t)$ bounded in $[0,1]$. We take $r$ to be deterministic and known for simplicity, but our result easily generalizes to randomized reward functions.  The agent specifies its actions via \emph{policies}. A stochastic policy $\pi$ can be represented by a probability mass function $\pi(\cdot\mid S_t)$ that an agent assigns to actions in $\mc{A}$ given situational state $S_t$.
	
\textbf{Regret.} 
Before we formally define the learning objectives of the agent, we extend the agent state to account for randomized policies. We consider the notion of an algorithmic state $Z_t$ introduced in \cite{lu2021reinforcement}, which captures the algorithmic randomness at time $t$. An algorithm is a deterministic sequence $\{\mu_t\mid t=1,2,\dots\}$ of functions, each mapping the pair $(\mathcal{H}_t,Z_t)$ to a policy. At each time step $t$, the algorithm samples a random algorithmic state $Z_t$ and computes a policy $\pi_t=\mu_t(\mathcal{H}_t, Z_t)$. We also write $\pi_t\sim \mu_t(\mathcal{H}_t)$ when the randomness introduced by $Z_t$ is clear in the context. The algorithm then samples actions $A_t\sim\pi_t(\cdot\mid S_t)$ at times $t$. For a policy $\pi$, we denote the average reward starting at state $s$ as
	\begin{equation}\label{eqn:avg_reward}
	    \lambda_{\pi,\mc{E}}(s) = \liminf_{T\to\infty}\bb{E}_\pi\left[\frac{1}{T}\sum_{t=0}^{T-1} R_{t+1}\Bigg|\mc{E},S_0=s\right],
	\end{equation}
	where the subscript of the expectation indicates that the reward sequence is realized by following policy $\pi$, and the subscript $\mc{E}$ emphasizes the dependence of the average reward on the environment $\mc{E}$. We also denote the optimal average reward as 
    $$\lambda_{*,\mc{E}}(s) = \sup_{\pi}\lambda_{\pi,\mc{E}}(s)\quad\forall s\in\mc{S}.$$ 
    When the supremium is attained, we call an argmax an average-optimal policy, denoted as $\pi^*_\mc{E}$.  We consider weakly-communicating Markovian environments, the most general subclass of MDPs for which finite time regret bounds are plausible. This assumption also appears in \cite{Ouyang2017TSDE,Agrawal2013FurtherTS}. We give a formal definition for MDPs with finite state spaces below.
 
	\begin{definition}\label{defn:weakly-communicating}
	    {\bf(weakly-communicating MDP)} An MDP is weakly communicating if there exists a set of states, where each state in that set is accessible from every other state in that set under some deterministic stationary policy, along with a possibly empty set of states which is transient under every policy. 
	\end{definition}
	We remark that the optimal average reward $\lambda_{*,\mc{E}}(\cdot)$ is state-independent under weakly-communicating MDPs. Thus, we override the notation $\lambda_{*,\mc{E}}$ to denote the optimal average reward $\lambda_{*,\mc{E}}(s)$ for all states $s\in\mc{S}$. For a policy $\pi$, we define its regret up to time $T$ to be
	\begin{equation}\label{eqn:regret}
	    \text{Regret}(T,\pi)\coloneqq\sum_{t=0}^{T-1} \left(\lambda_{*,\mc{E}} - R_{t+1}\right).
	\end{equation}
	The regret itself is a random variable depending on the random environment $\mc{E}$, the algorithm's internal random sampling, and random transitions. We will measure agent performance in terms of regret and its expected value. 

\section{Continuing PSRL Algorithm}
    For episodic MDPs, the planning horizon is fixed ahead and known to the agent. The planning objective is often naturally the cumulative reward over the finite number of timesteps until the end of each episode. When the horizon is infinite, planning ahead becomes a challenge for the agent. One way to address this challenge is to set an effective finite planning horizon for the agent by maintaining a discount factor $\gamma\in[0,1)$. Essentially, $\gamma$ dictates the frequency with which the algorithm resamples an independent environment model used for planning. Given this discount factor, we divide the original infinite-horizon learning problem into \emph{pseudo-episodes} with random lengths. Each pseudo-episode terminates when the algorithm resamples and computes a new policy. Concretely, at the beginning of timestep $t=0,1,\dots$, the agent samples a binary indicator $X_t$.  If $X_t=0$, the agent samples a new environment $\mc{E}$ based on the history $\mathcal{H}_t$ available at that time, and marks $t$ as the start of a new pseudo-episode. It then computes a new policy $\pi$ to follow in this pseudo-episode, and acts according to $\pi$. If $X_t=1$, it continues the current pseudo-episode and follows the most recently computed policy $\pi$. 
	When $X_t\sim \text{Bernoulli}(\gamma)$, one may interpret $\gamma$ as the \emph{survival probability} of a pseudo-episode at timestep $t$. We let $E_k$ denote the set of timesteps in the $k$-th pseudo-episode, $t_k$ denote the starting timestep of the $k$-th pseudo-episode, and $s_{k,1}$ denote the starting state of the $k$-th pseudo-episode.
    
	\textbf{Discounted value.} 
    At each timestep, the agent optimizes a discounted objective with the aforementioned discount factor $\gamma$. 
    For each environment $\mc{E}$ and policy $\pi$, the $\gamma$-discounted value function $V^\gamma_{\pi,\mc{E}}\in\bb{R}^S$ of $\pi$ in $\mc{E}$ is defined as 
    \begin{equation}\label{eqn:discounted_value}
        V_{\pi,\mc{E}}^\gamma \coloneqq \bb{E}\left[\sum_{h=0}^{H-1} P_\pi^h r_\pi\mid\mc{E}\right] = \bb{E}\left[\sum_{h=0}^{\infty} \gamma^hP_\pi^h r_\pi\mid\mc{E}\right],
    \end{equation}
    where $P_{\pi ss'}=\sum_{a\in\mc{A}}\pi(a\mid s)\rho(s'\mid s,a)$ and $r_{\pi s}=\sum_{a\in\mc{A}}\pi(a\mid s)r_{as}$ for all $s,s'\in\mc{S}$ and $a\in\mc{A}$, and the expectation is over the random episode length $H$. Since a pseudo-episode terminates at time $t$ when the independently sampled $X_t\sim\text{Bernoulli}(\gamma)$ takes value 0, its length $H$ follows a binomial distribution with parameter $\gamma$. The second equality above is a direct consequence of this observation. A policy $\pi$ is said to be $\gamma$-discounted-optimal for the environment $\mc{E}$ if $V_{\pi,\mc{E}}^\gamma=\sup_{\pi'}V_{\pi',\mc{E}}^\gamma$. For a $\gamma$-discounted-optimal, we also write its value $V_{*,\mc{E}}^\gamma(s)\equiv V_{\pi,\mc{E}}^\gamma(s)$ as the optimal value. Furthermore, we denote a $\gamma$-discounted-optimal policy with respect to $V_{*,\mc{E}}^\gamma$ for each $\mc{E}$ as $\pi_{\mc{E}}^\gamma$, which will be useful in the analysis. When $\gamma$ is clear from the context, we omit the $\gamma$ subscript to avoid cluttering. Note that $V_{\pi,\mc{E}}^\gamma$ satisfies the Bellman equation
    \begin{align}
        \label{eqn:bellman-equation-v}
        V_{\pi,\mc{E}}^\gamma = r_\pi + \gamma P_\pi V_{\pi,\mc{E}}^\gamma.
    \end{align}
    
    \textbf{Reward averaging time.} We consider the notion of reward averaging times $\tau_{\pi,\mc{E}}$ of policies introduced in \cite{Dong2021Simple} and derive regret bounds that depend on $\tau_{\pi,\mc{E}}$. 
    
    \begin{definition}\label{defn:reward-avg-time}
        {\bf(reward averaging time)} The reward averaging time $\tau_{\pi,\mc{E}}$ of a policy $\pi$ is the smallest value $\tau\in[0,\infty)$ such that 
        $$
        \left|\bb{E}_\pi\left[\sum_{t=0}^{T-1}R_{t+1}\mid \mc{E},S_0=s\right]-T\cdot\lambda_{\pi,\mc{E}}(s)\right|\le\tau,
        $$
        for all $T\ge 0$ and $s\in\mc{S}$.
    \end{definition}
	When $\pi^*_\mc{E}$ is an average-optimal policy for $\mc{E}$, $\tau_{*,\mc{E}}\coloneqq\tau_{\pi^*_\mc{E},\mc{E}}$ is equivalent to the notion of span in \cite{Bartlett2012REGAL}. We define $\Omega_*$ to be the set of all weakly communicating MDPs $\mc{E}$ and further make the following assumption on the prior distribution $\bb{P}(\mc{E}\in\cdot)$. This assumption says that we focus on weakly communicating MDPs with bounded reward averaging time.
    \begin{assumption}\label{assump:weakly_comm_prior}
        There exists $\tau<\infty$ such that the prior distribution over possible environments $\bb{P}(\mc{E}\in\cdot)$ satisfies $$\bb{P}\Big(\mc{E}\in\Omega_*,\ \tau_{*, \mc{E}}\le\tau\Big)=1.$$
    \end{assumption}
    
    Below, we restate an important lemma, Lemma 2 in \cite{wei2020modelfree}, that relates $\lambda_{*,\mc{E}}$, $\tau_{*,\mc{E}}$, and $V_{*,\mc{E}}^\gamma$.  We note again that for weakly communicating $\mc{E}\in\Omega_*$, the optimal average reward is state independent, i.e., $\lambda_{*,\mc{E}}\equiv\lambda_{*,\mc{E}}(s)$ for all $s\in\mc{S}$. 
    
	\begin{lemma}\label{lem:reward-averaging-time}
	For all $\mc{E}\in\Omega_*$, $s\in\mc{S}$, and $\gamma\in[0,1)$, $$\left|V_{*,\mc{E}}^\gamma(s)-\frac{\lambda_{*,\mc{E}}}{1-\gamma}\right|\le\tau_{*,\mc{E}}.$$
	\end{lemma}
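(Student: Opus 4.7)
The plan is to reduce the claim to a direct application of the reward averaging time bound via a summation-by-parts (Abel summation) identity on the discounted value function. The main observation is that both $V_{\pi,\mc{E}}^\gamma(s)$ and $\lambda_{\pi,\mc{E}}(s)/(1-\gamma)$ can be rewritten as the same geometric mean of quantities whose pairwise difference is precisely what Definition~\ref{defn:reward-avg-time} controls.

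First, I would use the reward form of the value function,
$$V_{\pi,\mc{E}}^\gamma(s) \;=\; \sum_{t=0}^{\infty}\gamma^{t}\,\bb{E}_\pi[R_{t+1}\mid \mc{E}, S_0=s],$$
and introduce the partial sum $G_T(s) \coloneqq \bb{E}_\pi\!\big[\sum_{t=0}^{T-1} R_{t+1}\mid \mc{E}, S_0=s\big]$. Summation by parts, together with $G_0=0$ and the fact that $G_T \le T$ so $\gamma^{T}G_T \to 0$, yields
$$V_{\pi,\mc{E}}^\gamma(s) \;=\; (1-\gamma)\sum_{t=0}^{\infty}\gamma^{t}\,G_{t+1}(s).$$
In parallel, the standard identity $(1-\gamma)\sum_{t=0}^\infty \gamma^t(t+1) = 1/(1-\gamma)$ lets me write
$$\frac{\lambda_{\pi,\mc{E}}(s)}{1-\gamma} \;=\; (1-\gamma)\sum_{t=0}^{\infty}\gamma^{t}\,(t+1)\lambda_{\pi,\mc{E}}(s).$$

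Subtracting the two representations and applying the triangle inequality term by term gives
$$\left|V_{\pi,\mc{E}}^\gamma(s) - \frac{\lambda_{\pi,\mc{E}}(s)}{1-\gamma}\right| \;\le\; (1-\gamma)\sum_{t=0}^{\infty}\gamma^{t}\,\big|G_{t+1}(s) - (t+1)\lambda_{\pi,\mc{E}}(s)\big|.$$
Now Definition~\ref{defn:reward-avg-time}, applied with $T = t+1$, bounds every summand by $\tau_{\pi,\mc{E}}$, so the right-hand side is at most $(1-\gamma)\tau_{\pi,\mc{E}}\sum_{t=0}^\infty \gamma^t = \tau_{\pi,\mc{E}}$, which is exactly the conclusion.

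I don't anticipate significant obstacles; the only mildly delicate step is justifying the summation-by-parts rearrangement (vanishing of the boundary term $\gamma^{T}G_{T+1}$ as $T\to\infty$), which follows immediately from $R_t \in [0,1]$. Everything else is a mechanical manipulation plus one invocation of the reward averaging time definition.
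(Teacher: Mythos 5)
Your proof is correct: the Abel-summation identity $V_{\pi,\mc{E}}^\gamma(s)=(1-\gamma)\sum_{t=0}^{\infty}\gamma^{t}G_{t+1}(s)$ is valid (the boundary term vanishes since $G_{T+1}\le T+1$), the companion identity for $\lambda_{\pi,\mc{E}}(s)/(1-\gamma)$ is the standard derivative-of-geometric-series fact, and Definition~\ref{defn:reward-avg-time} applied at $T=t+1$ bounds each summand by $\tau_{\pi,\mc{E}}$. The paper itself gives no proof---it imports the lemma from the cited reference---and your argument is precisely the standard one used there, so there is nothing to flag.
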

    For completeness, we include a proof in Appendix \ref{app:reward_avg_time}.
    Thus, under Assumption \ref{assump:weakly_comm_prior}, we have 
    \begin{equation}\label{eqn:span-of-values}
	\left|V_{*,\mc{E}}^\gamma(s) - V_{*,\mc{E}}^\gamma(s')\right| \le 2\tau_{*,\mc{E}}\le 2\tau
	\end{equation}
	almost surely for all $s,s'\in\mc{S}$.
	
\textbf{Discounted regret.} Although the regret we eventually hope to analyze is defined by \eqref{eqn:regret}, we also consider a discounted version of the regret to aid our analysis. To analyze the performance of our algorithm over $T$ timesteps, we set $K=\arg\max\{k:t_k\le T\}$ to be the number of pseudo-episodes until time $T$. In our subsequent analysis, we adopt the convention that $t_{K+1}=T+1$. To get a bound for general $T$, we can always fill in the rest of the timesteps to make a full pseudo-episode and the asymptotic rate stays the same. Moreover, it is easy to see that for all $\gamma\in[0,1)$, $\bb{E}[K] \le (1-\gamma)T + 1$. Given a discount factor $\gamma\in[0,1)$, the $\gamma$-discounted regret up to time $T$ is
	\begin{equation}\label{eqn:discounted_regret}
		\text{Regret}_\gamma(K,\pi) \coloneqq \sum_{k=1}^K \Delta_k,
	\end{equation}
	where $\Delta_k= V_{*,\mc{E}}^\gamma(s_{k,1}) - V_{\pi_k,\mc{E}}^\gamma(s_{k,1})$ is the regret over pseudo-episode $k$,
	with $V_{*,\mc{E}}^\gamma=V_{\pi^\gamma_{\mc{E}},\mc{E}}^\gamma$, $\pi_k\sim\mu_k(H_{t_k})$, $A_t\sim\pi_k(\cdot\mid S_t)$, $S_{t+1}\sim\rho(\cdot|S_t,A_t)$, and $R_t=r(S_t, A_t, S_{t+1})$ for $t\in E_k$. 
	
	\textbf{Empirical estimates.} Finally, we define the empirical transition probabilities used by the algorithm. Let $N_{t}(s,a)=\sum_{\tau=1}^{t}\mathbbm{1}\{(S_\tau,A_\tau)=(s,a)\}$ be the number of times action $a$ has been sampled in state $s$ up to timestep $t$. For every pair $(s,a)$ with $N_{t_k}(s,a)>0$, the empirical transition probabilities up to pseudo-episode $k$ are
	\begin{align}
		\hat{\rho}_k(s'\mid s,a) &= \sum_{\ell=1}^{k-1}\sum_{t\in E_k}\frac{\mathbbm{1}\{(S_t,A_t,S_{t+1})=(s,a,s')\}}{N_{t_k}(s,a)}
	\end{align}
	for all $s'\in\mathcal{S}$. If the pair $(s,a)$ has never been sampled before pseudo-episode $k$, we let $\hat{\rho}_k(s'\mid s,a)=1$ for a random $s'\in \mc{S}$, and $\hat{\rho}_k(s''\mid s,a)=0$ for $s''\in\mc{S}\setminus \{s'\}$. The corresponding matrix notation $\hat{P}^k$ are defined analogously.

    \subsection{The Algorithm}
    We present the Continuing Posterior Sampling for Reinforcement Learning algorithm in Algorithm \ref{alg:PSRL_inf}, which extends PSRL \cite{Osband2013MoreER} to the infinite horizon setting with $\gamma$-discounted planning. The algorithm begins with a prior distribution over environments with actions $\mathcal{A}$ and state space $\mc{S}$. In addition, the algorithm takes an indicator $X_1=0$ and assumes a discount factor $\gamma$. At the start of each timestep $t$, if the indicator $X_t=0$, Continuing PSRL samples an environment $\mc{E}_t = (\mc{A},\mc{S},\rho_t)$ from the posterior distribution conditioned on the history $\mathcal{H}_t$ available at that time, and mark $t$ as the start of a new pseudo-episode. It then computes and follows the policy $\pi_t=\pi^{\mc{E}_t}$ at time $t$. Otherwise, if $X_t=1$, it sticks to the policy $\pi_t=\pi_{t-1}$ at time $t$ and adds step $t$ to the current pseudo-episode. Then $X_{t+1}$ is drawn from a Bernoulli distribution with parameter $\gamma$ to be used in the next timestep. 

    \begin{minipage}{0.9\linewidth}
    \begin{algorithm}[H]
        \begin{algorithmic}[1]
		\caption{Continuing PSRL (Continuing Posterior Sampling for Reinforcement Learning)}\label{alg:PSRL_inf}
		\INPUT Prior distribution $f$, discount factor $\gamma$, total learning time $T$
            \STATE \textbf{initialize} $t=1$, $k=1$, $X_1=0$
		\FOR{$t\le T$}
			\IF{$X_t=0$}
			    \STATE $t_k\gets t$
				\STATE sample $\mc{E}^k\sim f(\cdot\mid \mathcal{H}_{t_k})$
                    \STATE compute $\pi_k = \pi^\gamma_{\mc{E}^k}$\\
				$k\gets k+1$
                \ENDIF
                \STATE sample and apply $A_t\sim \pi_k(\cdot\mid S_t)$
			\STATE observe $R_{t+1}$ and $S_{t+1}$
			\STATE $t\gets t+1$
			\STATE sample $X_{t+1}\sim{\rm Bernoulli}\left(\gamma\right)$
		\ENDFOR
        \end{algorithmic}
    \end{algorithm}
    \end{minipage}
    
    Compared with the vanilla PSRL, Continuing PSRL simply adds an independent Bernoulli random number generator to determine when to resample.  Although Continuing PSRL is not designed to be implemented in practice per se, we note that such resampling scheme brings both scalability and generalizability. For example, when the environment has an extremely large state or action space, e.g.~Atari games \cite{Mnih2015HumanlevelCT}, prior resampling methods relying on state-action visitation statistics \cite{Ouyang2017TSDE} require a huge look-up table, while the resampling method in Continuing PSRL can still be easily applied, with little computational overhead.

\section{Main Results}
We present our main results in this section. Theorem \ref{thm:bayesian_regret} establishes that PSRL with discounted planning satisfies a polynomial Bayesian regret bound for infinite-horizon tabular MDP environments. The bounds for the expected regret of Continuing PSRL are of order $\tilde{O}(\tau S\sqrt{AT})$, where $\tilde{O}$ omits logarithmic factors, matching the regret bound for TSDE in \cite{Ouyang2017TSDE}, but achieved by a simple and elegant algorithm without additional episode termination criteria. 
	\begin{theorem}\label{thm:bayesian_regret}
 With $\gamma = 1 - \sqrt{SA/T}$, the regret of Algorithm  \ref{alg:PSRL_inf} satisfies
	$$
	\bb{E}\left[\text{\emph{Regret}}(T,\pi)\right] = \tilde{O}\left(\tau S\sqrt{AT}\right).
	$$
	\end{theorem}
	Note that our main theorem bounds the regret with respect to the optimal average reward, and thus has no dependence on the discount factor $\gamma$, which, as we emphasized, is only a design factor within the algorithm. For the purpose of the analysis, we utilize the $\gamma$-discounted regret defined in \eqref{eqn:discounted_regret} and prove the following intermediate bound for the discounted regret. 
	\begin{theorem}\label{thm:discounted_regret_bound} Given $T$ and $K=\arg\max\{k:t_k\le T\}$,
	\begin{equation*}
\bb{E}\left[\text{\emph{Regret}}_\gamma(K,\pi)\right] \le (4\tau+2)S\sqrt{28AT\log(2SAT)} + \frac{SA}{1-\gamma}\log\left(\frac{1}{1-\gamma}\log\left(\frac{2T}{1-\gamma}\right)\right) + \frac{1}{2} + \frac{2}{1-\gamma}.
	\end{equation*}
	\end{theorem}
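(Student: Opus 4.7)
The plan is to adapt the standard PSRL regret decomposition to the pseudo-episodic continuing setting by combining a posterior-sampling argument, the span bound from Lemma \ref{lem:reward-averaging-time}, and an $\ell_1$ confidence set for transitions. First I would use posterior matching: conditional on $H_{t_k}$, the sampled $\mc{E}^k$ is distributionally identical to $\mc{E}$, and $\pi_k=\pi^{\mc{E}^k}$ is deterministic in $\mc{E}^k$, so $\bb{E}[V_{*,\mc{E}}^\gamma(s_{k,1})\mid H_{t_k}]=\bb{E}[V_{\pi_k,\mc{E}^k}^\gamma(s_{k,1})\mid H_{t_k}]$ and hence $\bb{E}[\Delta_k\mid H_{t_k}]=\bb{E}[V_{\pi_k,\mc{E}^k}^\gamma(s_{k,1})-V_{\pi_k,\mc{E}}^\gamma(s_{k,1})\mid H_{t_k}]$. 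Subtracting the two Bellman equations~(\ref{eqn:bellman-equation-v}), iterating under the real dynamics of $\mc{E}$, and using that $\gamma^h$ equals the probability that $E_k$ contains step $t_k+h$, I would rewrite this difference as $\gamma\,\bb{E}\bigl[\sum_{t\in E_k}(\rho^{\mc{E}^k}(\cdot\mid S_t,A_t)-\rho^{\mc{E}}(\cdot\mid S_t,A_t))^\top V_{\pi_k,\mc{E}^k}^\gamma\bigm| H_{t_k},\mc{E}^k\bigr]$, a simulation-lemma-style identity.

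Second, I would bound each inner product by a span-times-$\ell_1$ product. Lemma \ref{lem:reward-averaging-time}, Assumption \ref{assump:weakly_comm_prior}, and the fact that $\pi_k$ is optimal for $\mc{E}^k$ give $\text{span}(V_{\pi_k,\mc{E}^k}^\gamma)\le 2\tau$ almost surely---this is precisely~(\ref{eqn:span-of-values}) applied to $\mc{E}^k$---so each inner product is at most $\tau\,\|\rho^{\mc{E}^k}(\cdot\mid s,a)-\rho^{\mc{E}}(\cdot\mid s,a)\|_1$. Next I would introduce an $\ell_1$ confidence set $M_k$ around $\hat{\rho}_k$ with radius $\beta_k(s,a)=\sqrt{28S\log(2SAT)/(N_{t_k}(s,a)\vee 1)}$, calibrated by Weissman's inequality together with a union bound over $(s,a)$ and pseudo-episodes so that $\Pr(\mc{E}\notin M_k)$ is of order $1/(SAT)$ uniformly in $k$. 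By posterior matching the same bound applies to $\mc{E}^k$, so on the joint good event $\|\rho^{\mc{E}^k}(\cdot\mid s,a)-\rho^{\mc{E}}(\cdot\mid s,a)\|_1\le 2\beta_k(s,a)$, while the complementary event contributes $O(1)$ expected regret absorbed into the $1/2$ additive constant.

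Finally, I would carry out a pigeonhole count of $\bb{E}[\sum_{t=1}^T\beta_{k(t)}(S_t,A_t)]$, where $k(t)$ is the index of the pseudo-episode containing $t$. Splitting pseudo-episodes into those during which no visited $(s,a)$ doubles its visit count and those during which some pair does, the first class admits the standard estimate $\sum_{t=1}^T 1/\sqrt{N_t(S_t,A_t)\vee 1}=O(\sqrt{SAT})$, producing the leading $\tau S\sqrt{28AT\log(2SAT)}$ term. The ``doubling'' pseudo-episodes are few---each of the $SA$ pairs can witness at most $O(\log T)$ doublings---but each has expected length $1/(1-\gamma)$, which, combined with a union-bound control of the maximum pseudo-episode length by $O((1-\gamma)^{-1}\log(T/(1-\gamma)))$, produces the $\frac{SA}{1-\gamma}\log(\tfrac{1}{1-\gamma}\log\tfrac{2T}{1-\gamma})$ correction. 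The pseudo-episode straddling $T$ contributes $\bb{E}[H_K]\le 1/(1-\gamma)$, accounting for the final $2/(1-\gamma)$ additive term.

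The main obstacle is this last pigeonhole step: unlike TSDE, whose second termination criterion enforces $N_t\le 2N_{t_k}$ throughout each episode, CPSRL's Bernoulli resampling offers no such guarantee, and the pseudo-episode length is only geometric. Controlling the doubling events and the tail of the geometric pseudo-episode length simultaneously---without making the confidence radii or the termination rule policy-dependent---is the technical heart of the argument and the source of the unusual doubly-logarithmic factor in the stated bound.
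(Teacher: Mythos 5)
Your proposal is correct and follows essentially the same route as the paper: posterior matching to pass from $\Delta_k$ to $\tilde{\Delta}_k$, the simulation-lemma identity over a geometric-length pseudo-episode (Lemma \ref{lem:value_decomp}), the $2\tau$ span bound from Lemma \ref{lem:reward-averaging-time} and \eqref{eqn:span-of-values}, Weissman $\ell_1$ confidence sets, and a pigeonhole count with doubling pseudo-episodes handled separately under the length cap $m=\tfrac{1}{1-\gamma}\log\tfrac{2K}{1-\gamma}$. The only deviations are minor constant calibrations and the bookkeeping of the additive terms (the paper charges $\tfrac{2}{1-\gamma}$ to the confidence-set failure event and $\tfrac12$ to the long-episode event rather than to the straddling episode), which does not change the argument.
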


\section{Analysis}
    As discussed in \cite{Osband2013MoreER}, a key property of posterior sampling algorithms is that for any function $g$ measurable with respect to the sigma algebra generated by the history, $\bb{E}[g(\mc{E})|\mathcal{H}_t]=\bb{E}[g(\mc{E}^t)|\mathcal{H}_t]$ if $\mc{E}^t$ is sampled from the posterior distribution at time $t$. Under our pseudo-episode construction, we state a stopping-time version of this property similar to the one in \cite{Ouyang2017TSDE}. 
	\begin{lemma}\label{lem:posterior_sampling}
	    If $f$ is the distribution of $\mc{E}$, then for any $\sigma(\mathcal{H}_{t_k})$-measurable function $g$, we have
	    \begin{equation}
	        \bb{E}[g(\mc{E})|\mathcal{H}_{t_k}] = \bb{E}[g(\mc{E}^k)|\mathcal{H}_{t_k}],
	    \end{equation}
	    where the expectation is taken over $f$. 
	\end{lemma}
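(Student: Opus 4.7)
The plan is to reduce this stopping-time identity to the one-step posterior-sampling property, which holds at any deterministic time by the very definition of $\mc{E}^k$ as a draw from the conditional law $f(\cdot \mid H_{t_k})$. The key observation is that $t_k$---the starting index of the $k$-th pseudo-episode---is a stopping time with respect to a filtration that carries both the observable history $H_t$ and the agent's own algorithmic randomness (the Bernoulli resampling indicators $X_t$ and the auxiliary algorithmic states $Z_t$), all of which are drawn independently of the random environment $\mc{E}$.

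First I would make this explicit: let $\mc{F}_t = \sigma(H_t, X_1, \ldots, X_t, Z_1, \ldots, Z_t)$. By inspection of Algorithm \ref{alg:PSRL_inf}, the event $\{t_k = t\}$ is $\mc{F}_t$-measurable, so $t_k$ is an $\{\mc{F}_t\}$-stopping time, and moreover the conditional distribution of $\mc{E}$ given $\mc{F}_t$ coincides with its conditional distribution given $H_t$ because the algorithmic randomness is independent of $\mc{E}$. By the defining property of the posterior sample $\mc{E}^k \sim f(\cdot \mid H_{t_k})$, on the event $\{t_k = t\}$ we then have the almost sure identity
\[
\bb{E}[g(\mc{E}) \mid H_t] = \bb{E}[g(\mc{E}^k) \mid H_t],
\]
for any bounded measurable $g$, including $g$'s that depend parametrically on $H_t$ (which is the content of the $\sigma(H_{t_k})$-measurability in the statement).

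Second, I would partition by the value of $t_k$. For any event $A \in \sigma(H_{t_k})$, the decomposition
\[
\bb{E}[g(\mc{E}) \mathbbm{1}_A] = \sum_{t=0}^{\infty} \bb{E}\bigl[ g(\mc{E}) \mathbbm{1}_{A \cap \{t_k = t\}} \bigr] = \sum_{t=0}^{\infty} \bb{E}\bigl[ g(\mc{E}^k) \mathbbm{1}_{A \cap \{t_k = t\}} \bigr] = \bb{E}[g(\mc{E}^k) \mathbbm{1}_A]
\]
holds, where the middle equality uses that $A \cap \{t_k = t\} \in \mc{F}_t$ together with the one-step identity above applied inside $\bb{E}[\cdot \mid H_t]$, and the outer equalities are countable additivity. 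Since $A \in \sigma(H_{t_k})$ was arbitrary, the claim follows from the definition of conditional expectation.

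The main obstacle is less a technical difficulty than careful bookkeeping: one must verify that $t_k$ is a stopping time with respect to a filtration on which the one-step posterior identity continues to hold, i.e., a filtration that augments $\{\sigma(H_t)\}_t$ only with randomness independent of $\mc{E}$. Once this is in place, the stopping-time identity reduces to the deterministic-time identity via the partition above, mirroring the argument in \cite{Ouyang2017TSDE} but rendered simpler here because our resampling rule is policy-independent and depends only on the exogenous Bernoulli sequence $\{X_t\}$.
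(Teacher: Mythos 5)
Your proof is correct and takes essentially the same route as the paper's: both hinge on $t_k$ being a stopping time and on $\mc{E}^k$ being drawn from the posterior $\bb{P}(\mc{E}\in\cdot\mid H_{t_k})$, with the algorithmic randomness independent of $\mc{E}$, so that integrating over the posterior yields the identity. Your version simply makes explicit the partition over $\{t_k = t\}$ and the filtration bookkeeping that the paper's terse proof leaves implicit.
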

 
    We include a proof in Appendix \ref{app:posterior_equiv} for completeness. We let $V^\gamma_{\pi_k,k} = V^\gamma_{\pi_k,\mc{E}^k}$ denote the value function of $\pi_k$, the policy employed by Continuing PSRL, under the sampled environment $\mc{E}^k$, and define 
	$$
	\tilde{\Delta}_k = V^\gamma_{\pi_k,k}(s_{k,1}) - V^\gamma_{\pi_k,\mc{E}}(s_{k,1})
	$$
	as the difference in performance of $\pi_k$ under $\mc{E}^k$ and the true environment $\mc{E}$. The next lemma allows us to evaluate regret in terms of $\tilde{\Delta}_k$, which we can analyze using the known sampled environment and our observation from the true environment, whereas it is typically hard to directly analyze $\Delta_k$ since we do not know what the $\gamma$-discounted-optimal policy $\pi^\gamma_\mc{E}$ is.
	\begin{lemma}\label{lem:regret_equivalence}
	\begin{equation}
	    \bb{E}\left[\sum_{k=1}^K\Delta_k\right] = \bb{E}\left[\sum_{k=1}^K\tilde{\Delta}_k\right].
	\end{equation}
	\end{lemma}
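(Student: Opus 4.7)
}

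The plan is to show that the per-pseudo-episode difference $\Delta_k - \tilde{\Delta}_k$ has mean zero when conditioned on $H_{t_k}$, which then yields the identity after summing and taking an outer expectation. First I would subtract the two definitions to obtain
\[
\Delta_k - \tilde{\Delta}_k \;=\; V^\gamma_{*,\mc{E}}(s_{k,1}) - V^\gamma_{\pi_k,\mc{E}^k}(s_{k,1}).
\]
Since CPSRL sets $\pi_k = \pi^{\mc{E}^k}$, which is by construction optimal for the sampled environment $\mc{E}^k$, we have $V^\gamma_{\pi_k,\mc{E}^k} = V^\gamma_{*,\mc{E}^k}$, so the right-hand side equals $V^\gamma_{*,\mc{E}}(s_{k,1}) - V^\gamma_{*,\mc{E}^k}(s_{k,1})$.

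Next I would apply Lemma \ref{lem:posterior_sampling}. For any fixed $s\in\mc{S}$, the map $\mc{E}\mapsto V^\gamma_{*,\mc{E}}(s)$ is a deterministic function of the environment (it is defined by the Bellman optimality equation given $\mc{E}$), so the lemma gives $\E[V^\gamma_{*,\mc{E}}(s)\mid H_{t_k}] = \E[V^\gamma_{*,\mc{E}^k}(s)\mid H_{t_k}]$. Since $s_{k,1}=S_{t_k}$ is $\sigma(H_{t_k})$-measurable, decomposing via $\mathbbm{1}\{S_{t_k}=s\}$ and summing over $s\in\mc{S}$ yields
\[
\E\!\left[V^\gamma_{*,\mc{E}}(s_{k,1})\,\middle|\, H_{t_k}\right] \;=\; \E\!\left[V^\gamma_{*,\mc{E}^k}(s_{k,1})\,\middle|\, H_{t_k}\right],
\]
so $\E[\Delta_k-\tilde{\Delta}_k\mid H_{t_k}] = 0$.

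Finally I would deal with the random upper limit $K$. Rewriting $\sum_{k=1}^K = \sum_{k=1}^\infty \mathbbm{1}\{t_k\le T\}$ and observing that $t_k$ is a stopping time (it is determined by past $X_\cdot$ values, which are part of the history in an extended filtration, or equivalently by counting episode starts), the indicator $\mathbbm{1}\{t_k\le T\}$ is $\sigma(H_{t_k})$-measurable. Bounding $|\Delta_k|,|\tilde{\Delta}_k|$ by $2/(1-\gamma)$ and using $K\le T$ justifies Fubini, after which the tower property gives
\[
\E\!\left[\mathbbm{1}\{t_k\le T\}(\Delta_k - \tilde{\Delta}_k)\right] \;=\; \E\!\left[\mathbbm{1}\{t_k\le T\}\,\E[\Delta_k - \tilde{\Delta}_k \mid H_{t_k}]\right] \;=\; 0,
\]
and summing over $k$ yields the claim.

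The main obstacle I anticipate is purely a measurability check: verifying that $\mc{E}^k$, which is sampled at the start of pseudo-episode $k$, is drawn from the posterior $f(\cdot\mid H_{t_k})$ at a stopping time $t_k$, so that the stopping-time form of the posterior-sampling identity in Lemma \ref{lem:posterior_sampling} applies. Once that is in place, the argument reduces to the single algebraic identity $V^\gamma_{\pi_k,\mc{E}^k}=V^\gamma_{*,\mc{E}^k}$ together with one application of the posterior-sampling lemma per pseudo-episode.
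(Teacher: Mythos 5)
Your proof is correct and follows essentially the same route the paper takes: the paper's proof is a one-line appeal to Lemma \ref{lem:posterior_sampling} and the argument of Theorem 2 in Osband et al., and your writeup is exactly the standard expansion of that argument (cancel the common $V^\gamma_{\pi_k,\mc{E}}$ term, use optimality of $\pi_k$ in $\mc{E}^k$, apply the posterior-sampling identity at the stopping time $t_k$, and handle the random $K$ via $\sigma(H_{t_k})$-measurable indicators). The measurability details you fill in, including that $\mathbbm{1}\{t_k\le T\}$ is determined by the independent resampling sequence, are precisely the points the paper leaves implicit.
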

        \begin{proof}
	The claim follows by applying Lemma \ref{lem:posterior_sampling} and a similar argument as that in Theorem 2 in \cite{Osband2013MoreER}.
	\end{proof}
	We prove a value decomposition lemma that allows us to express the difference in values of $\pi_k$ in the true environment $\mc{E}$ and sampled environment $\mc{E}^k$ in terms of a sum of Bellman errors over one stochastic pseudo-episode. 
	\begin{lemma}[Value decomposition]\label{lem:value_decomp}
	For any environment $\hat{\mc{E}}$ and any policy $\pi$, 
	\begin{align*}
 &\bb{E}\left[V_{\pi,\mc{E}}^\gamma(s_0)-V_{\pi,\hat{\mc{E}}}^\gamma(s_0)\mid \mc{E},\hat{\mc{E}}\right] =  \bb{E}\left[\sum_{t=0}^{\eta-1}\gamma\left\langle P_{\pi(s_t)s_t}-\hat{P}_{\pi(s_t)s_t},V_{\pi,\hat{\mc{E}}}^\gamma\right\rangle\mid \mc{E},\hat{\mc{E}},\pi\right],
	\end{align*}
	where $\eta$ is the random length of a pseudo-episode, and the expectation is over the distribution of $\eta$, conditioned on the sampled state trajectory $s_0,s_1,\dots$ drawn from following $\pi$ in the environment $\mc{E}$.
	\end{lemma}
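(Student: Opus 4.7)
The plan is a simulation-lemma style argument: first derive an infinite discounted-sum identity for $V_{\pi,\mc{E}}^\gamma - V_{\pi,\hat{\mc{E}}}^\gamma$ by iterating the Bellman equation, and then re-express the geometric weights $\gamma^{t+1}$ as an expectation over the pseudo-episode length $\eta$ using that $\eta$ is geometrically distributed with parameter $1-\gamma$ and is independent of the state trajectory.

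Concretely, starting from the two Bellman equations $V_{\pi,\mc{E}}^\gamma = r_\pi + \gamma P_\pi V_{\pi,\mc{E}}^\gamma$ and $V_{\pi,\hat{\mc{E}}}^\gamma = r_\pi + \gamma \hat{P}_\pi V_{\pi,\hat{\mc{E}}}^\gamma$ (Eq.~\ref{eqn:bellman-equation-v}), subtracting and then adding and subtracting $\gamma P_\pi V_{\pi,\hat{\mc{E}}}^\gamma$ gives the one-step recursion
\[
V_{\pi,\mc{E}}^\gamma - V_{\pi,\hat{\mc{E}}}^\gamma \;=\; \gamma P_\pi \bigl(V_{\pi,\mc{E}}^\gamma - V_{\pi,\hat{\mc{E}}}^\gamma\bigr) \;+\; \gamma (P_\pi - \hat{P}_\pi)\, V_{\pi,\hat{\mc{E}}}^\gamma.
\]
Iterating $T$ times and noting that $r \in [0,1]$ implies $\|V_{\pi,\cdot}^\gamma\|_\infty \le 1/(1-\gamma)$, the residual $\gamma^T P_\pi^T (V_{\pi,\mc{E}}^\gamma - V_{\pi,\hat{\mc{E}}}^\gamma)$ vanishes as $T\to\infty$. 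Passing to the limit and evaluating at $s_0$, while interpreting $(P_\pi^t f)(s_0) = \bb{E}[f(s_t)\mid s_0,\mc{E},\pi]$ for any bounded $f$ with $\{s_t\}$ the trajectory $\pi$ induces in the \emph{true} environment $\mc{E}$, gives
\[
V_{\pi,\mc{E}}^\gamma(s_0) - V_{\pi,\hat{\mc{E}}}^\gamma(s_0) \;=\; \sum_{t=0}^\infty \gamma^{t+1}\, \bb{E}\!\left[\bigl\langle P_{\pi(s_t)s_t} - \hat{P}_{\pi(s_t)s_t},\, V_{\pi,\hat{\mc{E}}}^\gamma\bigr\rangle \,\Big|\, s_0,\mc{E},\hat{\mc{E}},\pi\right].
\]

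The final step absorbs the geometric weights into an expectation over $\eta$. Because $\eta$ is built from an independent Bernoulli$(\gamma)$ sequence, it is independent of $\{s_t\}$ and satisfies $\bb{P}(\eta > t) = \gamma^t$. Fubini then yields, for any bounded trajectory-measurable sequence $a_t$,
\[
\sum_{t=0}^\infty \gamma^{t+1}\, \bb{E}[a_t] \;=\; \gamma \sum_{t=0}^\infty \bb{P}(\eta > t)\, \bb{E}[a_t] \;=\; \bb{E}\!\left[\sum_{t=0}^{\eta - 1} \gamma\, a_t\right].
\]
Taking $a_t = \langle P_{\pi(s_t)s_t} - \hat{P}_{\pi(s_t)s_t}, V_{\pi,\hat{\mc{E}}}^\gamma\rangle$ and combining with the previous display gives the stated identity. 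There is no serious technical obstacle; the only care needed is bookkeeping: (i) $V_{\pi,\hat{\mc{E}}}^\gamma$ is a deterministic vector once we condition on $\hat{\mc{E}}$, so it passes through the inner product as a constant, and (ii) the independence of $\eta$ from the trajectory is exactly what legitimizes trading the deterministic discount sum for an expectation over a random upper limit.
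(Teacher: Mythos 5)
Your proposal is correct and follows essentially the same route as the paper: both derive the identity $V_{\pi,\mc{E}}^\gamma(s_0)-V_{\pi,\hat{\mc{E}}}^\gamma(s_0)=\sum_{t=0}^\infty\gamma^{t+1}\,\bb{E}\bigl[\langle P_{\pi(s_t)s_t}-\hat{P}_{\pi(s_t)s_t},V_{\pi,\hat{\mc{E}}}^\gamma\rangle\bigr]$ from the Bellman equations and then trade the geometric weights for an expectation over the independent pseudo-episode length $\eta$ via $\bb{P}(\eta>t)=\gamma^t$. The only cosmetic difference is that you iterate the recursion at the operator level and interpret $P_\pi^t$ as a trajectory expectation, whereas the paper telescopes pathwise and cancels explicit martingale-difference terms $d_t$ in expectation; the two are interchangeable.
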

    We defer the proof to Appendix \ref{app:proof_of_confidence_bound}.  At the start of each pseudo-episode $k$, we consider a confidence set
	\begin{equation*}
	    \mc{M}_k=\left\{(\mc{A},\mc{O},\rho): \left\|P_{as}-\hat{P}_{as}\right\|_1\le \sqrt{\frac{14S\log(2SAKt_k)}{\max\{N_{t_k}(s,a),1\}}} ~~\forall(s,a)\right\}.
	\end{equation*}
	The following lemma bounds the probability that the true environment falls outside of the confidence set $\mc{M}_k$ for all $k\in[K]$.
	\begin{lemma}\label{lem:confidence_bound}
    For $k\in[K]$, 
	\begin{equation*}
	    \bb{P}\left(\mc{E}\notin\mc{M}_k\right) \le \frac{1}{K}.
	\end{equation*}
	\end{lemma}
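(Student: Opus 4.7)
The plan is to invoke a standard $L_1$ concentration inequality for empirical distributions (Weissman's inequality), combined with a union bound over state--action pairs and over the possible values of the random sample count. The key observation is that although $N_{t_k}(s,a)$ couples with the agent's trajectory, conditional on the times at which $(S_t,A_t)=(s,a)$ the corresponding successor states are i.i.d.\ draws from $P_{as}$, which is the precise hypothesis needed for Weissman's bound.

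The argument proceeds in three steps. Step one: for each fixed $(s,a)$ and positive integer $n$, let $\hat{P}_{as}^{(n)}$ denote the empirical transition distribution based on the first $n$ observed successors from $(s,a)$. Weissman's inequality, conditional on $\mc{E}$, yields
\begin{equation*}
\bb{P}\!\left(\|\hat{P}_{as}^{(n)}-P_{as}\|_1 \ge \beta \,\big|\, \mc{E}\right) \le 2^S\exp(-n\beta^2/2).
\end{equation*}
Setting $\beta^2 = 14S\log(2SAKt_k)/n$ makes $n\beta^2/2 = 7S\log(2SAKt_k)$, so the right-hand side equals $2^S(2SAKt_k)^{-7S}$, independent of $n$. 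Step two: dispatch the unvisited pairs. If $N_{t_k}(s,a)=0$, then $\hat{P}_{as}$ is a point mass by definition, so $\|P_{as}-\hat{P}_{as}\|_1\le 2 \le \sqrt{14S\log(2SAKt_k)} = \beta_k(s,a)$ holds trivially. Step three: a union bound over $(s,a)\in\mc{S}\times\mc{A}$ and over the at most $t_k$ positive values that $N_{t_k}(s,a)$ can take yields
\begin{equation*}
\bb{P}(\mc{E}\notin\mc{M}_k) \le SA\cdot t_k\cdot 2^S(2SAKt_k)^{-7S}.
\end{equation*}
An elementary simplification (using $S\ge 1$ and $2SAKt_k\ge 2$) bounds the right-hand side by $1/K$, completing the proof.

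The main obstacle is conceptual rather than computational: justifying the union-bound step despite the fact that $N_{t_k}(s,a)$ is itself a random quantity determined by the history. The resolution is precisely the union over all candidate values $n \in \{1,\dots,t_k\}$; for each fixed $n$, the first $n$ successor draws are i.i.d.\ conditional on $\mc{E}$ and on the visitation indices, and thus Weissman's bound applies, and summing over all possible realizations of the count absorbs the dependence without requiring martingale or stopping-time machinery.
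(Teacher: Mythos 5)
Your proof is correct and follows essentially the same route as the paper's: Weissman's $L_1$ concentration bound with the exponent chosen so that the per-event failure probability is $2^S(2SAKt_k)^{-7S}$ independent of the count, trivial handling of unvisited pairs, and union bounds over $(s,a)$ and the random index. The one loose end is that your final display leaves the random stopping time $t_k$ on the right-hand side of a probability bound; to make this rigorous, either union additionally over the possible values of $t_k$ (as the paper does, summing $1/(4SAKt^2)$ over $t\ge 1$), or note that $t_k\ge N_{t_k}(s,a)=n$ on the event in question, so $t_k$ may be replaced by $n$ in the threshold before summing the resulting convergent series over $n$.
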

    The proof follows standard concentration arguments, which we defer to Appendix \ref{app:proof_of_confidence_bound}.
	We let
	$$
	m = \frac{1}{1-\gamma}\log\left(\frac{2K}{1-\gamma}\right)
	$$
	be a high probability upper bound for the length of each episode $E_k$, $k=1,\dots,K$. 
	Since $\tilde{\Delta}_k \le \frac{1}{1-\gamma}$, we can decompose the regret as
	\begin{align*}
	    \sum_{k=1}^K&\tilde{\Delta}_k \le \sum_{k=1}^K\tilde{\Delta}_k\mathbbm{1}_{\{\mc{E},\mc{E}^k\in\mc{M}_k\}}\cdot\mathbbm{1}_{\{|E_k|\le m\}} + \frac{1}{1-\gamma}\sum_{k=1}^K\left[\mathbbm{1}_{\{|E_k|>m\}}+\mathbbm{1}_{\{\mc{E}\notin\mc{M}_k\}} + \mathbbm{1}_{\{\mc{E}^k\notin\mc{M}_k\}}\right].
	\end{align*}
	Since $\mc{M}_k$ is $\sigma(\mathcal{H}_{t_k})$-measurable, by Lemma \ref{lem:posterior_sampling}, 
	$
	\bb{E}[\mathbbm{1}_{\{\mc{E}\notin\mc{M}_k\}}\mid \mathcal{H}_{t_k}] = \bb{E}[\mathbbm{1}_{\{\mc{E}^k\notin\mc{M}_k\}}\mid \mathcal{H}_{t_k}]$.  Therefore,
	\begin{align}\label{eqn:regret_decomp_into_event_cases}
	    \bb{E}\left[\sum_{k=1}^K\tilde{\Delta}_k\right] &\le \bb{E}\left[\sum_{k=1}^K\tilde{\Delta}_k\mathbbm{1}_{\{\mc{E},\mc{E}^k\in\mc{M}_k\}}\cdot\mathbbm{1}_{\{|E_k|\le m\}}\right] + \frac{1}{1-\gamma}\bb{E}\left[\sum_{k=1}^K \mathbbm{1}_{\{|E_k|>m\}}\right] + \frac{2}{1-\gamma}\cdot\bb{E}\left[\sum_{k=1}^K\mathbbm{1}_{\{\mc{E}\notin\mc{M}_k\}}\right].
	\end{align}
    The third term can be bounded by $\frac{2}{1-\gamma}$ via Lemma \ref{lem:confidence_bound}. In what follows, we show how to bound the second and the first term.
	
    \textbf{Second term.} For the second term, since $|E_k|$ follows a geometric distribution with parameter $1-\gamma$, applying the inequality $(1+\frac{x}{n})^n\le e^x$ by taking $n=\frac{1}{1-\gamma}\ge 1, x=-1$, we have
    \begin{align*}
        \frac{1}{1-\gamma}\bb{E}\left[\sum_{k=1}^K\mathbbm{1}_{\{|E_k|>m\}}\right] &= \frac{1}{1-\gamma}\sum_{k=1}^K\gamma^m = \frac{1}{1-\gamma}\sum_{k=1}^K\gamma^{\frac{1}{1-\gamma}\log(2K/(1-\gamma))} \le \frac{1}{1-\gamma}\sum_{k=1}^K2e^{-\log\left(\frac{2K}{1-\gamma}\right)} = \frac{1}{2}.
    \end{align*}
	
 \textbf{First term.} It remains to bound the first term in \eqref{eqn:regret_decomp_into_event_cases}. In Appendix \ref{app:bound_of_confidence_set_width}, we provide a detailed proof for the bound
	\begin{align*}
	\bb{E}\left[\sum_{k=1}^K\tilde{\Delta}_k\mathbbm{1}_{\{\mc{E},\mc{E}^k\in\mc{M}_k\}}\cdot\mathbbm{1}_{\{|E_k|\le m\}}\right] \le 4\tau\cdot S\sqrt{28AT\log(2SAT)} + \frac{SA}{1-\gamma}\log\left(\frac{1}{1-\gamma}\log\left(\frac{2T}{1-\gamma}\right)\right).
	\end{align*}
	
	\begin{proof}[Proof of Theorem \ref{thm:discounted_regret_bound}.]
	Combining the bound for each term in \eqref{eqn:regret_decomp_into_event_cases}, we have
	\begin{align*}
\bb{E}\left[\sum_{k=1}^K\tilde{\Delta}_k\right]
	   \le
	   4\tau S\sqrt{28AT\log(2SAT)} + \frac{SA}{1-\gamma}\log\left(\frac{1}{1-\gamma}\log\left(\frac{2T}{1-\gamma}\right)\right) + \frac{1}{2} + \frac{2}{1-\gamma}.
	\end{align*}
	The claim follows from Lemma \ref{lem:regret_equivalence}.
	\end{proof}

\subsection{Average Reward Regret}\label{subsec:avg_reward_regret}

    
 We have treated $\gamma$ as constant so far, resulting in a regret bound in \eqref{eqn:discounted_regret} that scales linearly with $\frac{1}{1-\gamma}$, ignoring logarithmic factors. However, the Continuing PSRL algorithm does not constrain $\gamma$ to be a constant during the learning process, i.e.~we are able to allow the discount factor to increase over time, accounting for the growing horizon.
 Specifically, at each time step $t$, we can consider a discount factor $\gamma_t$, and Continuing PSRL resamples a new environment with probability $1-\gamma_t$.
 If resampling happens, the agent switches to the optimal policy in the resampled environment maximizing the $\gamma_t$-discounted cumulative reward. 
 Otherwise, the agent keeps following the previous policy.
 If $\gamma_t$ increases with $t$, the agent is effectively planning over a longer horizon as interactions continue, and as Theorem \ref{thm:discounted_regret_bound} justifies, the agent's performance should always keep up with that of the optimal policy in the environment, regardless of the planning horizon. We detail in Appendix \ref{app:alg_schedule} a modified version of Continuing PSRL with a schedule of $\gamma_t$ that attains the bound in Theorem \ref{thm:discounted_regret_bound}. 


 
Although Theorem \ref{thm:discounted_regret_bound} provides a sublinear upper bound for the discounted regret with a fixed discount factor $\gamma$, we are ultimately interested in the performance shortfall with respect to the optimal average-reward policy. To obtain a sublinear upper bound for the latter, we employ such time-dependent discount factors discussed above, which allows us to show Theorem \ref{thm:bayesian_regret}, a regret bound for the Bayesian regret that does not depend on a discount factor, the proof of which is relegated to Appendix \ref{app:bayesian_regret}.  If we assume the knowledge of $T$, optimizing for $\gamma$, the best rate in terms of $T$ can be achieved by setting $1/(1-\gamma) = \sqrt{\frac{T}{SA}}$.  If the total learning horizon $T$ is unknown, we can utilize a classical doubling trick argument that is common in the design of online learning algorithms \cite{Zhang2020SampleER}. The idea is to divide the learning horizon into time intervals of the form $[2^k,2^{k+1})$ and set $1/(1-\gamma_t) = \sqrt{\frac{2^{k+1}}{SA}}$ for $t\in[2^k,2^{k+1})$, $k\in\mathbb{N}$.

\section{Simulations}
We conduct two sets of simulations to empirically validate our proposed method. The first is a tabular RiverSwim environment introduced in \citep{Osband2016BootstrappedDQN}, and the second a RiverSwim environment with continuous features as observations. We modify both environments so that there is no episodic reset. 
\begin{figure}[htb]
\begin{center}
\includegraphics[scale=0.28]{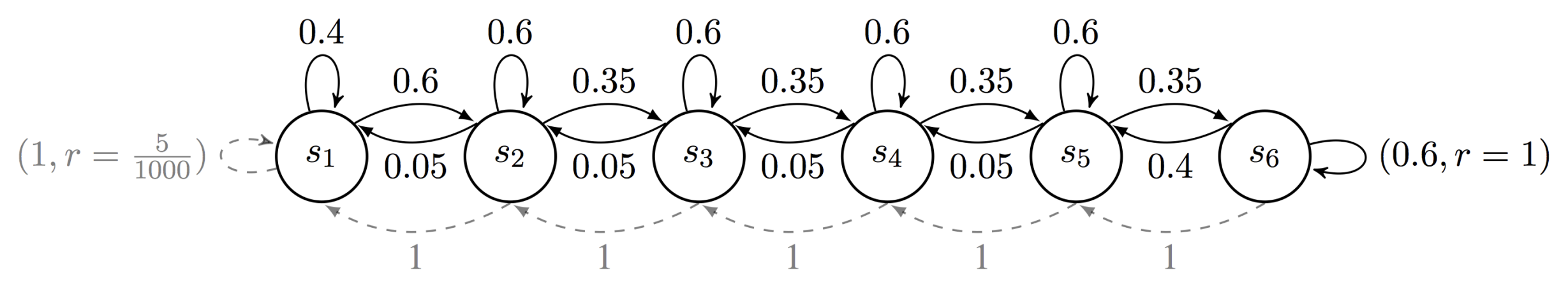}
\caption{\emph{Riverswim} - continuous and dotted arrows represent the MDP under the actions ``right'' and ``left'', respectively.}
\label{fig:riverswim}
\end{center}
\vspace{-0.5cm}
\end{figure}

The tabular RiverSwim environment consists of six states arranged in a chain, as shown in Figure \ref{fig:riverswim}. The agent is placed at the far left state in the beginning. The agent receives a small reward of 0.005 for reaching the leftmost state, but the rightmost state contains a much larger reward of 1. The optimal policy is thus to always try to swim right. 

We compare the performance of Continuing PSRL to TSDE \citep{Ouyang2017TSDE} and DS-PSRL \citep{theocharous2017posterior}. Note that both baselines require planning with respect to the average reward directly.  All agents start with a prior of Dirichlet and normal-gamma distributions over the transitions and rewards, respectively. We use pseudocount $n = 1$, $\alpha = 1/S$, and $\mu = \sigma^2 = 1$ for a diffuse uniform prior. Figure \ref{fig:cpsrl} (left) shows that Continuing PSRL is on par with TSDE despite the latter optimizing directly for average reward, and outperforms DS-PSRL by a large margin.

\begin{figure}[htb]
\vspace{-0.4cm}
\begin{center}
\includegraphics[scale=0.45]{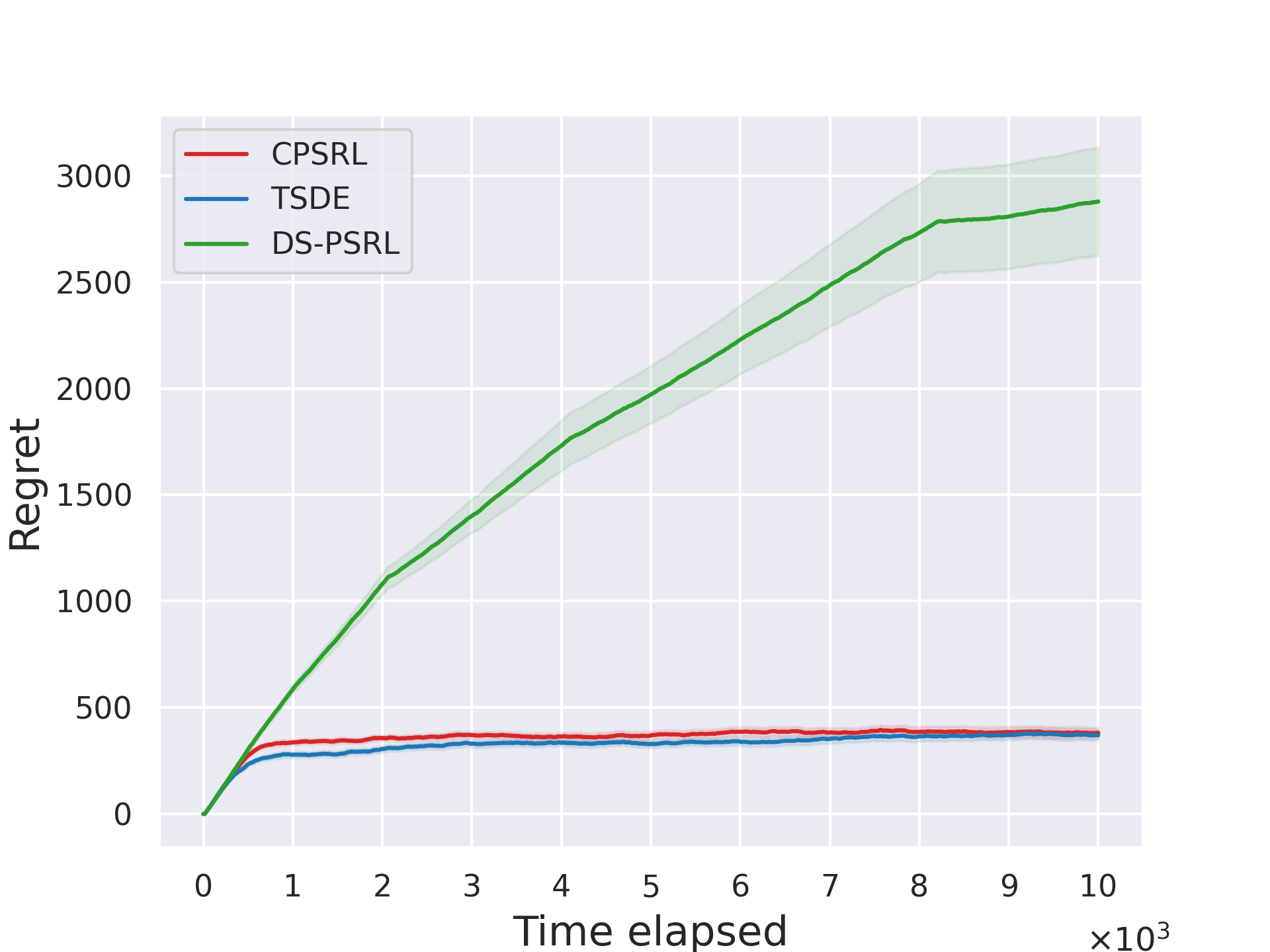}
\hspace{-0.5cm}
\includegraphics[scale=0.45]{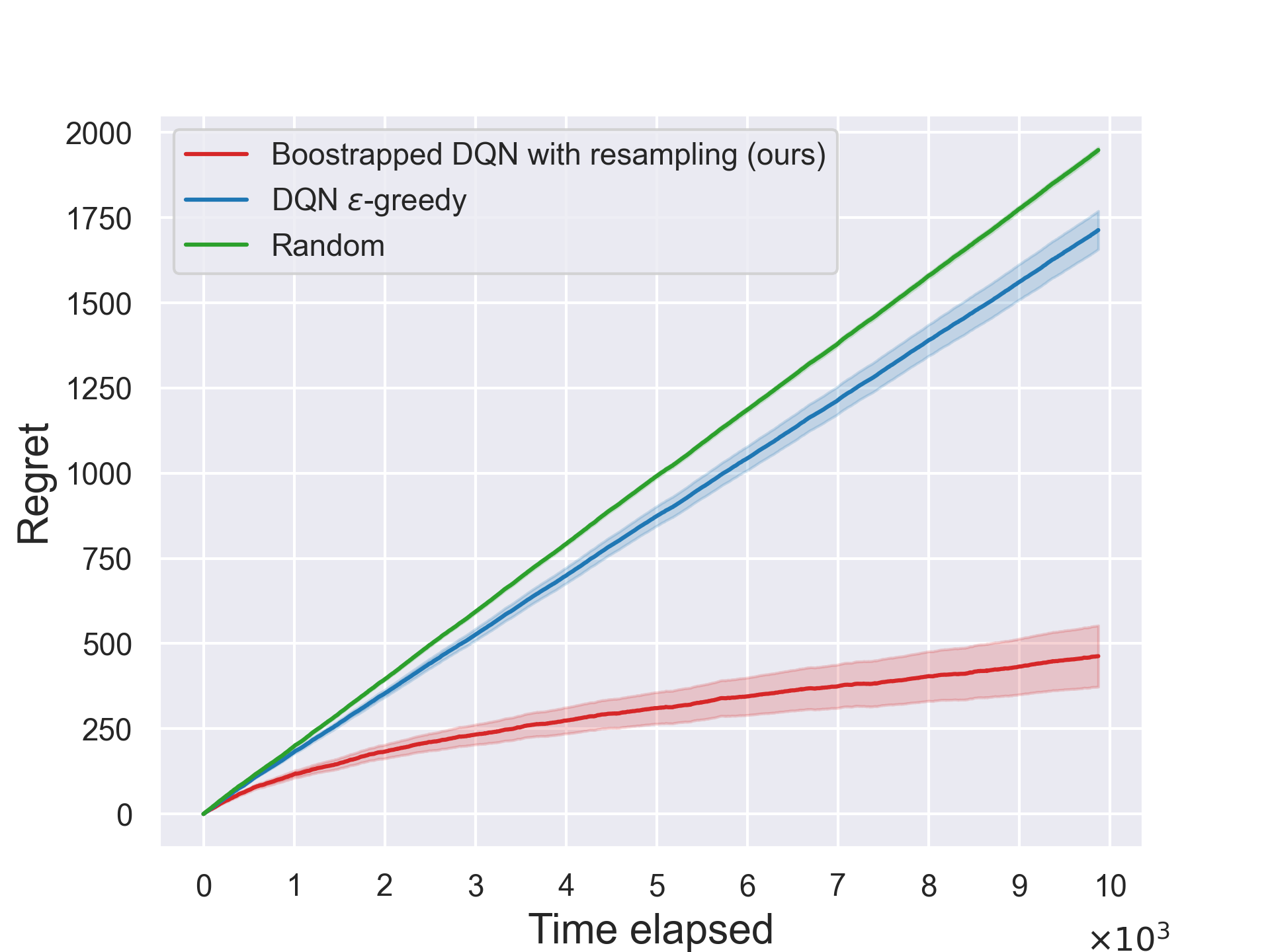}
\caption{Regret curves aggregated across 50 Monte Carlo runs for all algorithms.}
\label{fig:cpsrl}
\end{center}
\vspace{-0.3cm}
\end{figure}

The second set of simulations aims to provide a concrete illustration of how our resampling technique can be extended and applied to complex scenarios. We consider a similar setup as RiverSwim, but with observations as raw pixel features, obtained via a feature mapping $\phi(s_t) = \mathbbm{1}\{x\le s_t\}$ in $[0,1]^N$. To extend our algorithm design to the continuous state space and model-free setting, we enhance the finite-horizon bootstrapped DQN with a similar resampling design as Continuing PSRL. The idea is to resample a new ensemble index to use in each pseudo-episode if $X_t\sim\mathrm{Bernoulli}(\gamma)$ equals zero. For this experiment, we choose a fixed $\gamma=0.99$. The tabular baseline TSDE has to utilize visitation counts, and thus cannot be adapted to a continuous state space. DS-PSRL requires an oracle or subroutine that computes the optimal average reward, which is not readily available in their paper. Instead, we run a comparison against a uniformly random agent and a vanilla DQN agent with $\epsilon$-greedy exploration, where we set $\epsilon=0.1$. As shown in Figure \ref{fig:cpsrl} (right), Bootstrapped DQN with our random sampling schedule achieves sublinear regret, implying that the average regret tends to zero as $T$ grows. In contrast, the vanilla DQN agent incurs linear regret, possibly due to its poor exploration ability.


\section{Conclusion}
We proposed a novel algorithm design extending PSRL to the setting where the environment does not have a reset schedule, and the agent has to plan over a possibly infinite horizon. 
We establish theoretically that our algorithm, Continuing PSRL, enjoys a regret upper bound that is close to the theoretical optimality.
Notably, Continuing PSRL only relies on a single Bernoulli random number generator to resample the environment, as opposed to the complex episode-stopping schemes in prior works.
Such design principle can be readily applied to general environments with large state spaces. 
Our simulations in both tabular and continuous RiverSwim environments demonstrate the effectiveness of our method. 
Moreover, Continuing PSRL also highlights the role of discount factor in agent design, as the discount factor is no longer considered as a part of the learning target, but mainly acts as a tool for the agent to dynamically adjust its planning horizon.
As such, this work might provide an important step towards  understanding discount factors, which have seen wide popularity in practical RL applications. 

\section*{Acknowledgements}

This work was generously supported by the T. S. Lo Graduate Fellowship.  We thank Chao Qin for helpful feedback.  


\bibliography{reference}

\begin{thebibliography}{22}
\providecommand{\natexlab}[1]{#1}
\providecommand{\url}[1]{\texttt{#1}}
\expandafter\ifx\csname urlstyle\endcsname\relax
  \providecommand{\doi}[1]{doi: #1}\else
  \providecommand{\doi}{doi: \begingroup \urlstyle{rm}\Url}\fi

\bibitem[Agrawal \& Goyal(2013)Agrawal and Goyal]{Agrawal2013FurtherTS}
Shipra Agrawal and Navin Goyal.
\newblock Further optimal regret bounds for {T}hompson sampling.
\newblock In \emph{International Conference on Artificial Intelligence and
  Statistics}, 2013.

\bibitem[Bartlett \& Tewari(2009)Bartlett and Tewari]{Bartlett2012REGAL}
Peter~L. Bartlett and Ambuj Tewari.
\newblock {REGAL}: A regularization based algorithm for reinforcement learning
  in weakly communicating {MDP}s.
\newblock In \emph{Conference on Uncertainty in Artificial Intelligence}, 2009.

\bibitem[Dong et~al.(2022)Dong, Roy, and Zhou]{Dong2021Simple}
Shi Dong, Benjamin~Van Roy, and Zhengyuan Zhou.
\newblock Simple agent, complex environment: Efficient reinforcement learning
  with agent states.
\newblock \emph{Journal of Machine Learning Research}, 23:\penalty0 1--54,
  2022.

\bibitem[Dwaracherla et~al.(2020)Dwaracherla, Lu, Ibrahimi, Osband, Wen, and
  Roy]{Dwaracherla2020Hypermodels}
Vikranth Dwaracherla, Xiuyuan Lu, Morteza Ibrahimi, Ian Osband, Zheng Wen, and
  Benjamin~Van Roy.
\newblock Hypermodels for exploration.
\newblock In \emph{International Conference on Learning Representations}, 2020.

\bibitem[François-Lavet et~al.(2015)François-Lavet, Fonteneau, and
  Ernst]{FranoisLavet2015HowTD}
Vincent François-Lavet, Rapha{\"e}l Fonteneau, and Damien Ernst.
\newblock How to discount deep reinforcement learning: Towards new dynamic
  strategies.
\newblock In \emph{Advances in Neural Information Processing Systems}, 2015.

\bibitem[Lattimore \& Hutter(2012)Lattimore and Hutter]{Lattimore2012PAC}
Tor Lattimore and Marcus Hutter.
\newblock {PAC} bounds for discounted {MDP}s.
\newblock In \emph{International Conference on Algorithmic Learning Theory},
  2012.

\bibitem[Lu et~al.(2021)Lu, Van~Roy, Dwaracherla, Ibrahimi, Osband, and
  Wen]{lu2021reinforcement}
Xiuyuan Lu, Benjamin Van~Roy, Vikranth Dwaracherla, Morteza Ibrahimi, Ian
  Osband, and Zheng Wen.
\newblock Reinforcement learning, bit by bit.
\newblock \emph{arXiv preprint arXiv:2103.04047}, 2021.

\bibitem[Mnih et~al.(2015)Mnih, Kavukcuoglu, Silver, Rusu, Veness, Bellemare,
  Graves, Riedmiller, Fidjeland, Ostrovski, Petersen, Beattie, Sadik,
  Antonoglou, King, Kumaran, Wierstra, Legg, and
  Hassabis]{Mnih2015HumanlevelCT}
Volodymyr Mnih, Koray Kavukcuoglu, David Silver, Andrei~A. Rusu, Joel Veness,
  Marc~G. Bellemare, Alex Graves, Martin~A. Riedmiller, Andreas Fidjeland,
  Georg Ostrovski, Stig Petersen, Charlie Beattie, Amir Sadik, Ioannis
  Antonoglou, Helen King, Dharshan Kumaran, Daan Wierstra, Shane Legg, and
  Demis Hassabis.
\newblock Human-level control through deep reinforcement learning.
\newblock \emph{Nature}, 518:\penalty0 529--533, 2015.

\bibitem[Naik et~al.(2019)Naik, Shariff, Yasui, and
  Sutton]{Naik2019DiscountedRL}
Abhishek Naik, Roshan Shariff, Niko Yasui, and Richard~S. Sutton.
\newblock Discounted reinforcement learning is not an optimization problem.
\newblock \emph{arXiv preprint arXiv:1910.02140}, 2019.

\bibitem[Osband et~al.(2013)Osband, Russo, and Roy]{Osband2013MoreER}
Ian Osband, Daniel Russo, and Benjamin~Van Roy.
\newblock ({M}ore) efficient reinforcement learning via posterior sampling.
\newblock In \emph{Advances in Neural Information Processing Systems}, 2013.

\bibitem[Osband et~al.(2016{\natexlab{a}})Osband, Blundell, Pritzel, and
  Van~Roy]{Osband2016BootstrappedDQN}
Ian Osband, Charles Blundell, Alexander Pritzel, and Benjamin Van~Roy.
\newblock Deep exploration via bootstrapped {DQN}.
\newblock In \emph{Advances in Neural Information Processing Systems},
  2016{\natexlab{a}}.

\bibitem[Osband et~al.(2016{\natexlab{b}})Osband, Van~Roy, and
  Wen]{Osband2016RVF}
Ian Osband, Benjamin Van~Roy, and Zheng Wen.
\newblock Generalization and exploration via randomized value functions.
\newblock In \emph{International Conference on International Conference on
  Machine Learning}, 2016{\natexlab{b}}.

\bibitem[Ouyang et~al.(2017)Ouyang, Gagrani, Nayyar, and Jain]{Ouyang2017TSDE}
Yi~Ouyang, Mukul Gagrani, Ashutosh Nayyar, and Rahul Jain.
\newblock Learning unknown {M}arkov decision processes: A {T}hompson sampling
  approach.
\newblock In \emph{Advances in Neural Information Processing Systems}, 2017.

\bibitem[Puterman(1994)]{puterman1994mdp}
Martin~L. Puterman.
\newblock \emph{Markov Decision Processes: Discrete Stochastic Dynamic
  Programming}.
\newblock John Wiley \& Sons, Inc., USA, 1st edition, 1994.

\bibitem[Russo et~al.(2018)Russo, Roy, Kazerouni, Osband, and
  Wen]{Russo2018ATO}
Daniel~J. Russo, Benjamin~Van Roy, Abbas Kazerouni, Ian Osband, and Zheng Wen.
\newblock A tutorial on {T}hompson sampling.
\newblock \emph{Foundations and Trends® in Machine Learning}, 11\penalty0
  (1):\penalty0 1--96, 2018.

\bibitem[Tang et~al.(2024)Tang, Jain, Hao, and Wen]{tang2024efficient}
Dengwang Tang, Rahul Jain, Botao Hao, and Zheng Wen.
\newblock Efficient online learning with offline datasets for infinite horizon
  mdps: A bayesian approach, 2024.

\bibitem[Theocharous et~al.(2018)Theocharous, Wen, Abbasi~Yadkori, and
  Vlassis]{theocharous2017posterior}
Georgios Theocharous, Zheng Wen, Yasin Abbasi~Yadkori, and Nikos Vlassis.
\newblock Scalar posterior sampling with applications.
\newblock In \emph{Advances in Neural Information Processing Systems}, 2018.

\bibitem[Thompson(1933)]{Thompson1933ONTL}
William~R. Thompson.
\newblock On the likelihood that one unknown probability exceeds another in
  view of the evidence of two samples.
\newblock \emph{Biometrika}, 25:\penalty0 285--294, 1933.

\bibitem[Wang et~al.(2020)Wang, Dong, Chen, and Wang]{Wang2020Q-learning}
Yuanhao Wang, Kefan Dong, Xiaoyu Chen, and Liwei Wang.
\newblock Q-learning with {UCB} exploration is sample efficient for
  infinite-horizon {MDP}.
\newblock In \emph{International Conference on Learning Representations}, 2020.

\bibitem[Wei et~al.(2020)Wei, Jafarnia-Jahromi, Luo, Sharma, and
  Jain]{wei2020modelfree}
Chen-Yu Wei, Mehdi Jafarnia-Jahromi, Haipeng Luo, Hiteshi Sharma, and Rahul
  Jain.
\newblock Model-free reinforcement learning in infinite-horizon average-reward
  markov decision processes.
\newblock In \emph{International Conference on Machine Learning}, 2020.

\bibitem[Weissman et~al.(2003)Weissman, Ordentlich, Seroussi, Verd{\'u}, and
  Weinberger]{Weissman2003InequalitiesFT}
Tsachy Weissman, Erik Ordentlich, Gadiel Seroussi, Sergio Verd{\'u}, and
  Marcelo~J. Weinberger.
\newblock Inequalities for the {L}1 deviation of the empirical distribution.
\newblock In \emph{Hewlett-Packard Labs Technical Reports}, 2003.

\bibitem[Zhang et~al.(2020)Zhang, Kim, O'Donoghue, and Boyd]{Zhang2020SampleER}
Junzi Zhang, Jongho Kim, Brendan O'Donoghue, and Stephen Boyd.
\newblock Sample efficient reinforcement learning with {REINFORCE}.
\newblock In \emph{AAAI Conference on Artificial Intelligence}, 2020.

\end{thebibliography}
\bibliographystyle{rlc}

\newpage
\appendix
\section{Bounding the sum of confidence set widths}\label{app:bound_of_confidence_set_width}
	We are interested in bounding $$
	\bb{E}\left[\sum_{k=1}^K\tilde{\Delta}_k\mathbbm{1}_{\{\mc{E},\mc{E}^k\in\mc{M}_k\}}\cdot\mathbbm{1}_{\{|E_k|\le m\}}\right].
	$$
	\begin{proof}
	For each $k$, we define the event
	$$
	\mc{B}_k \coloneqq \left\{N_{t_{k+1}-1}(s,a)+1\le 2N_{t_k}(s,a)~\text{for all}~(s,a)\in\mc{S}\times\mc{A}\right\}.
	$$
	Then $\mc{B}_k^c = \{N_{t_{k+1}-1}(s,a)\ge2N_{t_k}(s,a)~\text{for some}~(s,a)\in\mc{S}\times\mc{A}\}$.
	
	Following a similar strategy as in \cite{Osband2013MoreER}, we can write
	\begin{align*}
	    \bb{E}\left[\sum_{k=1}^K\tilde{\Delta}_k\mathbbm{1}_{\{\mc{E},\mc{E}^k\in\mc{M}_k\}}\cdot\mathbbm{1}_{\{|E_k|\le m\}}\right] &\le \bb{E}\left[\sum_{k=1}^K\tilde{\Delta}_k\mathbbm{1}_{\{\mc{E},\mc{E}^k\in\mc{M}_k\}}\cdot\mathbbm{1}_{\mc{B}_k}\right] + \bb{E}\left[\sum_{k=1}^K\tilde{\Delta}_k\mathbbm{1}_{\{|E_k|\le m\}}\cdot\mathbbm{1}_{\mc{B}_k^c}\right]\\
	    &\le \bb{E}\left[ \sum_{k=1}^K\bb{E}\left[\tilde{\Delta}_k\mid \mc{E},\mc{E}^k\right]\mathbbm{1}_{\{\mc{E},\mc{E}^k\in\mc{M}_k\}}\cdot\mathbbm{1}_{\mc{B}_k}\right] + \bb{E}\left[\sum_{k=1}^K\tilde{\Delta}_k\mathbbm{1}_{\{|E_k|\le m\}}\cdot\mathbbm{1}_{\mc{B}_k^c}\right].
	\end{align*}
	The event $N_{t_{k+1}-1}(s,a)\ge 2N_{t_k}(s,a)$ can happen in at most $\log m$ episodes per state action pair under the event $\{|E_k|\le m\}$. Thus, the second term can be bounded by
	$$
	\bb{E}\left[\sum_{k=1}^K\tilde{\Delta}_k\mathbbm{1}_{\{|E_k|\le m\}}\cdot\mathbbm{1}_{\mc{B}_k^c}\right] \le \frac{SA}{1-\gamma}\log m.
	$$
	We define $\{(s_{k,i},a_{k,i})\}_{i=1}^{|E_k|}$ to be the trajectory followed by $\pi_k$ in pseudo-episode $k$ starting from state $s_{k,1}=s_{t_k}$, the state at the beginning of pseudo-episode $k$. Taking $\mc{E} = \mc{E}^k$ in \eqref{eqn:span-of-values}, $|\min_{s\in\mc{S}}V^\gamma_{\pi_k,\mc{E}^k}(s)-\max_{s\in\mc{S}}V^\gamma_{\pi_k,\mc{E}^k}(s)|\le 2\tau$ for all $s\in\mc{S}$. Thus, we can bound the first term
	\begin{align*}
	    &\quad \bb{E}\left[ \sum_{k=1}^K\bb{E}\left[\tilde{\Delta}_k\mid \mc{E},\mc{E}^k\right]\mathbbm{1}_{\{\mc{E},\mc{E}^k\in\mc{M}_k\}}\cdot\mathbbm{1}_{\mc{B}_k}\right]\\
	    &\le \bb{E}\sum_{k=1}^K\sum_{i=1}^{|E_k|}\bb{E}\left[\gamma\left|\langle P_{\pi_k(s_{k,i})s_{k,i}}-P^k_{\pi_k(s_{k,i})s_{k,i}},V_{\pi_k,\mc{E}^k}^\gamma\rangle\right|\mid \mc{E},\mc{E}^k\right]\cdot\mathbbm{1}_{\{\mc{E},\mc{E}^k\in\mc{M}_k\}}\cdot\mathbbm{1}_{\mc{B}_k}\\
	    &= \bb{E}\sum_{k=1}^K\sum_{i=1}^{|E_k|}\bb{E}\Big[\gamma\left|\langle P_{\pi_k(s_{k,i})s_{k,i}}-P^k_{\pi_k(s_{k,i})s_{k,i}},V_{\pi_k,\mc{E}^k}^\gamma-\min_{s\in\mc{S}}V_{\pi_k,\mc{E}^k}^\gamma(s)\cdot\mathbf{1}\rangle\right|\\
	    &\quad + \gamma\left|\langle P_{\pi_k(s_{k,i})s_{k,i}}-P^k_{\pi_k(s_{k,i})s_{k,i}},\min_{s\in\mc{S}}V_{\pi_k,\mc{E}^k}^\gamma(s)\cdot\mathbf{1}\rangle\right|\mid\mc{E},\mc{E}^k\Big]\cdot\mathbbm{1}_{\{\mc{E},\mc{E}^k\in\mc{M}_k\}}\cdot\mathbbm{1}_{\mc{B}_k}\\
	    &= \bb{E}\sum_{k=1}^K\sum_{i=1}^{|E_k|}\bb{E}\Big[\gamma\left|\langle P_{\pi_k(s_{k,i})s_{k,i}}-P^k_{\pi_k(s_{k,i})s_{k,i}},V_{\pi_k,\mc{E}^k}^\gamma-\min_{s\in\mc{S}}V_{\pi_k,\mc{E}^k}^\gamma(s)\cdot\mathbf{1}\rangle\right|\mid\mc{E},\mc{E}^k\Big]\cdot\mathbbm{1}_{\{\mc{E},\mc{E}^k\in\mc{M}_k\}}\cdot\mathbbm{1}_{\mc{B}_k}\\
	    &\le \bb{E}\sum_{k=1}^K\sum_{i=1}^{|E_k|}\gamma\left\|P_{\pi_k(s_{k,i})s_{k,i}}-P^k_{\pi_k(s_{k,i})s_{k,i}}\right\|_1\left\|V_{\pi_k,\mc{E}^k}^\gamma-\min_{s\in\mc{S}}V_{\pi_k,\mc{E}^k}^\gamma(s)\cdot\mathbf{1}\right\|_\infty\cdot\mathbbm{1}_{\{\mc{E},\mc{E}^k\in\mc{M}_k\}}\cdot\mathbbm{1}_{\mc{B}_k}\\
	    &\le \bb{E}\sum_{k=1}^K\sum_{i=1}^{|E_k|}\min\{4\tau\gamma\beta_k(s_{k,i},a_{k,i}),1\}\cdot\mathbbm{1}_{\mc{B}_k}.
	\end{align*}
	where the second equality follows since $P_{\pi_k(s_{k,i})s_{k,i}}$ and $P^k_{\pi_k(s_{k,i})s_{k,i}}$ are probability distributions, in the second-to-last inequality we apply H\"older's inequality, and in the last inequality we apply Lemma \ref{lem:value_decomp}. We proceed to bounding the first term. Recall that $\beta_k(s,a)= \sqrt{\frac{14S\log(2SAKt_k)}{\max\{N_{t_k}(s,a),1\}}}$, then
	\begin{align*}
	    &\quad \sum_{k=1}^K\sum_{i=1}^{|E_k|}\min\{4\tau\gamma\beta_k(s_{k,i},a_{k,i}),1\}\cdot\mathbbm{1}_{\mc{B}_k} \le 4\tau\sum_{k=1}^K\sum_{i=1}^{|E_k|}\mathbbm{1}_{\mc{B}_k}\sqrt{\frac{14S\log(2SAKt_k)}{\max\{1,N_{t_k}(s_{k,i},a_{k,i})\}}}.
	\end{align*}
	Under the event $\mc{B}_k=\{N_{t_{k+1}-1}(s,a)+1\le2N_{t_k}(s,a)~\forall(s,a)\in\mc{S}\times\mc{A}\}$, for any $t\in E_k$, $N_t(s,a)+1\le N_{t_{k+1}-1}(s,a)+1\le2N_{t_k}(s,a)$. Therefore,
	\begin{align*}
	    \sum_{k=1}^K\sum_{t\in E_k} \sqrt{\frac{\mathbbm{1}_{\mc{B}_k}}{N_{t_k}(s_t,a_t)}} &\le \sum_{k=1}^K\sum_{t\in E_k}\sqrt{\frac{2}{N_t(s_t,a_t)+1}}\\
	    &= \sqrt{2}\sum_{t=1}^T (N_t(s_t,a_t)+1)^{-1/2}\\
	    &\le \sqrt{2}\sum_{s,a}\sum_{j=1}^{N_{T+1}(s,a)} j^{-1/2}\\
	    &\le \sqrt{2}\sum_{s,a}\int_{x=0}^{N_{T+1}(s,a)}x^{-1/2}dx\\
	    &\le \sqrt{2SA\sum_{s,a}N_{T+1}(s,a)}\\
	    &= \sqrt{2SAT}.
	\end{align*}
	Since all rewards and transitions are absolutely constrained in $[0,1]$, our term of interest
	\begin{align*}
	&\quad \sum_{k=1}^K\sum_{i=1}^{|E_k|}\min\{4\tau\gamma\beta_k(s_{k,i},a_{k,i}),1\}\cdot\mathbbm{1}_{\mc{B}_k} \le 4\tau\cdot S\sqrt{28AT\log(2SAT)}.
	\end{align*}
	\end{proof}

\section{Continuing PSRL with $\gamma$-scheduling}
\label{app:alg_schedule}

    \begin{algorithm}[htb]
        \begin{algorithmic}[1]
		\caption{Continuing PSRL with $\gamma$-scheduling}\label{alg:CPSRL_scheduling}
		\INPUT Prior distribution $f$, discount factor schedule function $\mathtt{Schedule}$, total learning time $T$
            \STATE \textbf{initialize} $t=1$, $k=1$, $X_1=0$
		\FOR{$t\le T$}
			\IF{$X_t=0$}
			    \STATE $t_k\gets t$
				\STATE sample $\mc{E}^k\sim f(\cdot\mid \mathcal{H}_{t_k})$
                    \STATE compute $\pi_k = \pi^{\mc{E}^k}$\\
				$k\gets k+1$
                \ENDIF
                \STATE sample and apply $A_t\sim \pi_k(\cdot\mid S_t)$
			\STATE observe $R_{t+1}$ and $S_{t+1}$
			\STATE $t\gets t+1$
                \STATE $\gamma_t\gets\mathtt{Schedule}(t,T)$
			\STATE sample $X_{t+1}\sim{\rm Bernoulli}\left(\gamma_t\right)$
		\ENDFOR
        \end{algorithmic}
    \end{algorithm}

\section{Proof of Theorem \ref{thm:bayesian_regret}}
\label{app:bayesian_regret}

\begin{proof}[Proof of Theorem \ref{thm:bayesian_regret}]
	To better represent the dependence on $K$, set
	$$
	\mathcal{R}(K,\pi)\coloneqq\text{Regret}(T,\pi),
	$$
	where $T$ is at the end of the $K$-th pseudo-episode. The expected regret can be written as
	\begin{align*}
	   \bb{E}\left[\mathcal{R}(K,\pi)\right] &=\bb{E}_\pi\left[\sum_{k=1}^K\sum_{t\in E_k}(\lambda_*-R_t)\right] = \bb{E}_\pi\left[\sum_{k=1}^{K}|E_k|\lambda_*-\sum_{k=1}^{K}\sum_{t\in E_k} R_t\right].
	\end{align*}
    Adding and subtracting the optimal discounted value, 
    \begin{align*}
    \bb{E}\left[\mathcal{R}(K,\pi)\right] &= \underbrace{\bb{E}\left[\sum_{k=1}^{K}(|E_k|\lambda_*-V_{*,\mc{E}}^\gamma(s_{k,1}))\right]}_{(a)} + \underbrace{\bb{E}\left[\sum_{k=1}^{K}\left[V_{*,\mc{E}}^\gamma(s_{k,1})-V_{\pi_k,\mc{E}}^\gamma(s_{k,1})\right]\right]}_{(b)}.
    \end{align*}
	Since the length of each pseudo-episode is independent of the policy or the environment itself with mean $\frac{1}{1-\gamma}$, term (a) is the difference between the optimal average reward, weighted by the effective horizon, and the optimal discounted reward.  By Lemma \ref{lem:reward-averaging-time},
	\begin{align*}
	    (a) = \bb{E}\left[\sum_{k=1}^K(\frac{1}{1-\gamma}\lambda_*-V^\gamma_{*,\mc{E}}(s_{k,1}))\right] = \bb{E}\left[\sum_{k=1}^{K}\left|\frac{1}{1-\gamma}\lambda_*-V_{*,\mc{E}}^\gamma(s_{k,1})\right|\right] &\le\tau\bb{E}[K].
	\end{align*}
	The expectation (b) is the sum of differences between the optimal discounted value and the discounted value of the deployed policy over $K$ pseudo-episodes. From \eqref{eqn:discounted_regret} we can see that (b) is exactly $\bb{E}\left[\text{Regret}_\gamma(K,\pi)\right]$. By Theorem \ref{thm:discounted_regret_bound}, we have that the regret can be bounded in terms of $\gamma$ as
	\begin{align*}
	    \bb{E}\left[\mathcal{R}(K,\pi)\right] &\le \tau\bb{E}[K] + \tilde{O}\left(4\tau S\sqrt{AT} + \frac{SA}{1-\gamma}\right) \le (1-\gamma)T\tau + \tau + \tilde{O}\left(4\tau S\sqrt{AT} + \frac{SA}{1-\gamma}\right).
	\end{align*}
	If we assume the knowledge of $T$, optimizing for $\gamma$, the best rate in terms of $T$ can be achieved by setting 
	$
	1/(1-\gamma) = \sqrt{\frac{T}{SA}},
	$
	and the final bound becomes
	\begin{align*}
	\bb{E}[\text{Regret}(T,\pi)] &= \bb{E}\left[\mathcal{R}(K,\pi)\right] \le \tau\sqrt{SAT} + \tilde{O}\left(4\tau S\sqrt{AT}\right) = \tilde{O}\left(\tau S\sqrt{AT}\right).
	\end{align*}
 If the total learning horizon $T$ is unknown, we can utilize a classical doubling trick argument that is common in the design of online learning algorithms \cite{Zhang2020SampleER}. The idea is to divide the learning horizon into time intervals of the form $[2^k,2^{k+1})$ and set
 $
 1/(1-\gamma_t) = \sqrt{\frac{2^{k+1}}{SA}}
 $
for $t\in[2^k,2^{k+1})$, $k\in\mathbb{N}$. 
 
	\end{proof}
	
\section{Supporting Lemmas}

\subsection{Proof of Lemma \ref{lem:reward-averaging-time}}\label{app:reward_avg_time}
\begin{proof}
Recall that standard MDP theory \cite{puterman1994mdp} shows that there exist $q^*:\mc{S}\times\mc{A} \to \bb{R}$ (unique up to an additive constant) such that:
\begin{align*}
    \lambda_{*,\mc{E}} + q^*(s,a) &= r(s,a) + \E_{s'\sim p(\cdot|s,a)}\left[ v^*(s')\right]\\
    v^*(s) &= \max_a q^*(s,a).
\end{align*}
We have
\begin{align*}
    (1-\gamma)V_{*,\mc{E}}^\gamma(s) &= (1-\gamma) V_{\pi_\mc{E}^\gamma,\mc{E}}^\gamma(s)
    = (1-\gamma)\E_{\pi_\mc{E}^\gamma} \left[ \sum_{t=0}^\infty \gamma^{t} r(S_t,A_t) | S_0=s\right]\\
    &\le (1-\gamma)\E_{\pi_\mc{E}^\gamma} \left[ \sum_{t=0}^\infty \gamma^{t} \left(\lambda_{*,\mc{E}} + v^*(S_t) - \E\left[ v^*(S_{t+1}) | S_t,A_t\right] \right) | S_0=s\right]\\
    &= \lambda_{*,\mc{E}} + (1-\gamma) \E_{\pi_\mc{E}^\gamma} \left[ \sum_{t=0}^\infty \gamma^{t} \left(v^*(S_t) - \E\left[ v^*(S_{t+1}) | S_t,A_t\right] \right) | S_0=s\right]\\
    &= \lambda_{*,\mc{E}} + (1-\gamma) \E_{\pi_\mc{E}^\gamma} \left[ v^*(S_0) - (1-\gamma) \sum_{t=0}^\infty \gamma^t v^*(S_{t+1}) | S_0=s\right]\\
    &\le \lambda_{*,\mc{E}} + (1-\gamma)\mathrm{sp}(v^*) = \lambda_{*,\mc{E}} + (1-\gamma)\tau_{*,\mc{E}}.
\end{align*}
    On the other hand, repeating the calculation for $\pi_\mc{E}^*$ yields
    \begin{align*}
        (1-\gamma)V_{\pi_\mc{E}^*,\mc{E}}^\gamma(s) &= \lambda_{*,\mc{E}} + (1-\gamma) \E_{\pi_\mc{E}^*} \left[ v^*(S_0) - (1-\gamma) \sum_{t=0}^\infty \gamma^t v^*(S_{t+1}) | S_0=s\right].
    \end{align*}
    Since $V_{*,\mc{E}}^\gamma(s) \ge V_{\pi_\mc{E}^*,\mc{E}}^\gamma(s)$, we have
    $$(1-\gamma)V_{*,\mc{E}}^\gamma(s) \ge \lambda_{*,\mc{E}} - (1-\gamma)\tau_{*,\mc{E}}.$$
    We conclude the proof by combining the two directions and dividing by $1-\gamma$. 
\end{proof}

\subsection{Proof of Lemma \ref{lem:posterior_sampling}}\label{app:posterior_equiv}
	\begin{proof}
	By definition, $t_k$ is a stopping time, so it is $\sigma(\mathcal{H}_{t_k})$-measurable. Since $\mc{E}^k$ is sampled from the posterior distribution $\bb{P}(\mc{E}\in\cdot|\mathcal{H}_{t_k})$, $\mc{E}^k$ and $\bb{P}(\mc{E}\in\cdot|\mathcal{H}_{t_k})$ are also measurable with respect to $\sigma(\mathcal{H}_{t_k})$. Conditioning on $\mathcal{H}_{t_k}$, the only randomness in $g(\mc{E}^k)$ is the random sampling in the algorithm. The proof follows by integrating over $\bb{P}(\mc{E}\in\cdot|\mathcal{H}_{t_k})$.
	\end{proof}
	
	\subsection{Proof of Lemma \ref{lem:value_decomp}}
	\begin{proof}
	Expanding the right hand side by the tower property, we get 
    \begin{align*}
        &\quad \bb{E}\left[\sum_{t=0}^{\eta-1}\gamma\left\langle P_{\pi(s_t)s_t}-\hat{P}_{\pi(s_t)s_t},V_{\pi,\hat{\mc{E}}}^\gamma\right\rangle\mid \mc{E},\hat{\mc{E}},\pi\right]\\
        &= \sum_{H=1}^\infty\bb{E}\left[\sum_{t=0}^{H-1}\gamma\left\langle P_{\pi(s_t)s_t}-\hat{P}_{\pi(s_t)s_t},V_{\pi,\hat{\mc{E}}}^\gamma\right\rangle\mid \mc{E},\hat{\mc{E}},\pi,\eta=H\right]\cdot\bb{P}(\eta=H)\\
        &= \sum_{H=1}^\infty\bb{E}\left[\sum_{t=0}^{H-1}\gamma\left\langle P_{\pi(s_t)s_t}-\hat{P}_{\pi(s_t)s_t},V_{\pi,\hat{\mc{E}}}^\gamma\right\rangle\mid \mc{E},\hat{\mc{E}},\pi\right]\cdot\gamma^{H-1}(1-\gamma)\\
        &= \bb{E}\left[\sum_{t=0}^\infty\sum_{h=t}^\infty\gamma^{h}(1-\gamma)\cdot\gamma\left\langle P_{\pi(s_t)s_t}-\hat{P}_{\pi(s_t)s_t},V_{\pi,\hat{\mc{E}}}^\gamma\right\rangle\mid \mc{E},\hat{\mc{E}},\pi\right]\\
        &= \bb{E}\left[\sum_{t=0}^\infty\gamma^{t+1}\left\langle P_{\pi(s_t)s_t}-\hat{P}_{\pi(s_t)s_t},V_{\pi,\hat{\mc{E}}}^\gamma\right\rangle\mid \mc{E},\hat{\mc{E}},\pi\right].
    \end{align*}
    Now we consider the left hand side. By Bellman equations,
	\begin{align*}
	        V_{\pi,\mc{E}}^\gamma(s_0)-V_{\pi,\hat{\mc{E}}}^\gamma(s_0) &= r_{\pi s_0} + \gamma\langle P_{\pi(s_0)s_0},V_{\pi,\mc{E}}^\gamma\rangle - r_{\pi s_0} - \gamma\langle\hat{P}_{\pi(s_0)s_0},V_{\pi,\hat{\mc{E}}}^\gamma\rangle\\
	        &= \gamma\langle P_{\pi(s_0)s_0}-\hat{P}_{\pi(s_0)s_0},V_{\pi,\hat{\mc{E}}}^\gamma\rangle + \gamma\langle P_{\pi(s_0)s_0},V_{\pi,\mc{E}}^\gamma-V_{\pi,\hat{\mc{E}}}^\gamma\rangle\\
	        &= \gamma\langle P_{\pi(s_0)s_0}-\hat{P}_{\pi(s_0)s_0},V_{\pi,\hat{\mc{E}}}^\gamma\rangle + d_0 + \gamma\left(V_{\pi,\mc{E}}^\gamma(s_1)  - V_{\pi,\hat{\mc{E}}}^\gamma(s_1)\right),
	    \end{align*}
	    where $d_{i}\coloneqq \gamma\langle P_{\pi(s_i)s_i},V_{\pi,\mc{E}}^\gamma-V_{\pi,\hat{\mc{E}}}^\gamma\rangle - \gamma\left(V_{\pi,\mc{E}}^\gamma(s_{i+1}) - V_{\pi,\hat{\mc{E}}}^\gamma(s_{i+1})\right)$. Applying recursion, we have
	    \begin{align*}
	        V_{\pi,\mc{E}}^\gamma(s_0)-V_{\pi,\hat{\mc{E}}}^\gamma(s_0) &= \sum_{t=0}^\infty \gamma^{t+1}\langle P_{\pi(s_t)s_t}-\hat{P}_{\pi(s_t)s_t},V_{\pi,\mc{E}}^\gamma\rangle + \sum_{t=1}^\infty d_{t}.
	    \end{align*}
	    In state $s_i$ under policy $\pi$, the expected value of $\gamma\left(V_{\pi,\mc{E}}^\gamma(s_{i+1}) - V_{\pi,\hat{\mc{E}}}^\gamma(s_{i+1})\right)$ is exactly $\gamma\langle P_{\pi(s_i)s_i},V_{\pi,\mc{E}}^\gamma-V_{\pi,\hat{\mc{E}}}^\gamma\rangle$, so conditioning on the true environment $\mc{E}$ and the sampled environment $\hat{\mc{E}}$, the expectation of $\sum_{t=0}^\infty d_t$ is zero. Thus, the left hand side can be written as
	    \begin{align*}
	        \bb{E}\left[V_{\pi,\mc{E}}^\gamma(s_0)-V_{\pi,\hat{\mc{E}}}^\gamma(s_0)\mid \mc{E},\hat{\mc{E}}\right] &= \bb{E}\left[\sum_{t=0}^\infty\gamma^{t+1}\langle P_{\pi(s_t)s_t}-\hat{P}_{\pi(s_t)s_t},V_{\pi,\hat{\mc{E}}}^\gamma\rangle\mid \mc{E},\hat{\mc{E}},\pi\right],
	    \end{align*}
	    and our claim is proved. 
	\end{proof}
	
	 \subsection{Proof of Lemma \ref{lem:confidence_bound}}\label{app:proof_of_confidence_bound}
	 \begin{proof}
	By the $L_1$-deviation bound for empirical distributions in \cite{Weissman2003InequalitiesFT}, when $N_{t_k}(s,a)>0$,
	\begin{align*}
	    \bb{P}\left(\left\|\hat{P}_{as}-P_{as}\right\|_1\ge\beta_k(s,a)|\mathcal{H}_{t_k}\right) &\le (2^S-2)\exp\left(-\frac{N_{t_k}(s,a)\beta_k^2(s,a)}{2}\right)\\
	    &\le 2^S\exp\left(-7S\log(2SAKt_k)\right)
	    \le 2^S\exp(-\log(4\cdot2^SSAKt_k^2)) = \frac{1}{4SAKt_k^2},
	\end{align*}
	where the second line is due to $\sum_{s'\in\mc{S}}\hat{P}_{as}(s')=\sum_{s'\in\mc{S}}P_{as}(s')=1$, and the third line follows from H\"older's inequality.
	When $N_{t_k}(s,a)=0$, the bounds trivially hold. Applying a union bound over all possible values of $t_k=1,\dots,\infty$, we get
	\begin{align*}
	    \bb{P}\left(\|\hat{P}_{as}-P_{as}\|_1\ge\beta_k(s,a)\right) &\le \sum_{t=1}^\infty \frac{1}{4SAKt^2} = \frac{\pi^2}{24SAK}\le\frac{1}{2SAK}.
	\end{align*}
	We may conclude the proof with a union bound over all $(s,a)$.
	\end{proof}

\end{document}